\documentclass{article}

\PassOptionsToPackage{numbers, compress}{natbib}

\usepackage[usenames,dvipsnames]{xcolor}         %

\usepackage[final]{neurips_2023}

\usepackage[utf8]{inputenc} %
\usepackage[T1]{fontenc}    %
\usepackage{hyperref}       %

\hypersetup{
    colorlinks = true,
    citecolor = blue!60!black,
}

\usepackage{url}            %
\usepackage{booktabs}       %
\usepackage{amsfonts}       %
\usepackage{nicefrac}       %
\usepackage{microtype}      %

\usepackage{algorithm}
\usepackage{algorithmic}
\usepackage{soul}

\usepackage{url}            %
\usepackage{booktabs}       %
\usepackage{amsfonts}       %
\usepackage{nicefrac}       %

\usepackage{natbib}

\usepackage{float}
\usepackage{subcaption}
\usepackage{graphicx}
\usepackage[export]{adjustbox}
\usepackage[inline]{enumitem}

\usepackage{multirow}

\usepackage{amsmath,amssymb,amsthm}
\usepackage[mathscr]{eucal}
\usepackage{stmaryrd}
\usepackage{accents}
\usepackage{upgreek}
\usepackage[scaled=0.8]{FiraMono}

\usepackage{booktabs,colortbl,multirow}

\usepackage{pifont}

\usepackage{algorithm,algorithmic}
\usepackage{wrapfig}

\graphicspath{{figs//}}

\allowdisplaybreaks

\def\1{\bm{1}}

\DeclareMathAlphabet{\mathsfit}{\encodingdefault}{\sfdefault}{m}{sl}
\SetMathAlphabet{\mathsfit}{bold}{\encodingdefault}{\sfdefault}{bx}{n}

\newcommand{\E}{\mathbb{E}}

\newcommand{\R}{\mathbb{R}}

\DeclareMathOperator*{\argmax}{argmax}
\DeclareMathOperator*{\argmin}{argmin}

\newcommand{\removefornow}[1]{}

\renewcommand{\Pr}{\mathbb{P}}

\newcommand{\XCal}{\mathscr{X}}
\newcommand{\YCal}{\mathscr{Y}}

\newcommand{\defEq}{\stackrel{.}{=}}

\newcommand{\Real}{\mathbb{R}}

\usepackage{comment}
\usepackage[capitalize,noabbrev]{cleveref}

\usepackage[toc,page,header]{appendix}
\usepackage{minitoc}

\crefformat{section}{\S#2#1#3}

\theoremstyle{plain}
\newtheorem{theorem}{Theorem}[section]
\newtheorem{proposition}[theorem]{Proposition}
\newtheorem{lemma}[theorem]{Lemma}
\newtheorem{corollary}[theorem]{Corollary}
\theoremstyle{definition}

\theoremstyle{remark}

\usepackage[textsize=tiny]{todonotes}

\newcommand{\ourtitle}{When Does Confidence-Based Cascade Deferral Suffice?}
\title{\ourtitle}

\author{
Wittawat Jitkrittum \qquad Neha Gupta \qquad Aditya Krishna Menon  \\[0.5em]
 \textbf{Harikrishna Narasimhan} \qquad \textbf{Ankit Singh Rawat} \qquad
\textbf{Sanjiv Kumar}  \\[0.8em]
Google Research, New York \\[0.8em]
\texttt{\{wittawat, nehagup, adityakmenon, hnarasimhan, ankitsrawat, sanjivk\}@google.com}
}

\begin{document}

\doparttoc %
\faketableofcontents %

\part{} %

\maketitle

\begin{abstract}
Cascades are a classical strategy to enable inference cost to vary \emph{adaptively} across samples,
wherein a sequence of classifiers are invoked in turn.
A \emph{deferral rule} determines whether to invoke the next classifier in the sequence, or to terminate prediction.
One simple deferral rule 
employs
the \emph{confidence} of the current classifier,
e.g.,
based on the maximum predicted softmax probability.
Despite being oblivious to the structure of the cascade 
---
e.g., not modelling the errors of downstream models
---
such confidence-based deferral often works remarkably well in practice.
In this paper, we seek to better understand 
the conditions under which confidence-based deferral may 
fail,
and when alternate deferral strategies can perform better.
We first present a theoretical characterisation of the optimal deferral rule, 
which precisely characterises settings under which confidence-based deferral may suffer.
We then 
study \emph{post-hoc} deferral mechanisms,
and demonstrate they
can significantly improve upon confidence-based deferral in settings where
(i)  downstream models are \emph{specialists} that only work well on a subset of inputs,
(ii) samples are subject to label noise,
and
(iii) there is distribution shift between the train and test set.

\end{abstract}

\section{Introduction}
Large neural models with several billions of parameters have shown considerable promise in challenging real-world problems, such as language modelling~\citep{Radford:2018,Radford:2019,Brown:2020,Raffel:2020} and image classification~\citep{Dosovitskiy:2021,Fedus:2021}.
While the quality gains of these models are impressive, they are typically accompanied with a sharp increase in inference time~\citep{Canziani:2016,Schwartz:2019}, thus limiting their applicability.
\emph{Cascades} offer one strategy to mitigate this~\citep{Viola:2001,Varshney:2022,saberian2014boosting,graf2004parallel},
by allowing for faster predictions on ``easy'' samples.
In a nutshell, cascades involve arranging multiple models in a sequence of increasing complexities.
For any test input, 
one iteratively 
applies the following recipe, starting with the first model in the sequence:
execute the current model, and employ 
a \emph{deferral rule} to determine whether to 
invoke the next model,
or
terminate with the current model's prediction.
One may further combine cascades with ensembles to significantly improve accuracy-compute trade-offs~\citep{Pedrajas:2005,wang2022wisdom}.

A key ingredient of cascades is the choice of deferral rule.
The simplest candidate
is to defer when the current model's \emph{confidence} in its prediction is sufficiently low.
Popular confidence measures include
the maximum predictive
probability over all classes~\citep{wang2022wisdom},
and the entropy of the predictive distribution~\citep{Huang:2018}. 
Despite being oblivious to the nature of the cascade
---
e.g., not modelling the errors of downstream models
---
such \emph{confidence-based deferral} works remarkably well in practice 
\citep{Wang:2018,Geifman:2019,Gangrade:2021,wang2022wisdom,Mamou:2022,narasimhan2022posthoc}.
Indeed, it has often been noted that such deferral can perform on-par with more complex modifications to the model training~\citep{Geifman:2019,Gangrade:2021,Kag:2023}.
However, the reasons for this success remain unclear;
further, it is not clear if there are specific practical settings where confidence-based deferral may perform poorly~\citep{narayan2022}.

In this paper, we initiate a systematic study of 
the potential limitations of confidence-based deferral for cascades.
Our findings and contributions are:
\begin{enumerate}[label=(\roman*),itemsep=0pt,topsep=0pt,leftmargin=16pt]
    \item We establish a novel result characterising the theoretically optimal deferral rule (Proposition~\ref{prop:oracle_k2}), which for a two-model cascade relies on the confidence of both model 1 and model 2. 
    
    \item In many regular classification tasks where model 2 gives a consistent estimate of the true posterior probability, confidence-based deferral is highly competitive.
    However, we show that in some settings,
    confidence-based deferral can be significantly sub-optimal both in theory and practice (\Cref{sec:chow-versus-oracle}). 
    This includes when
    (1) model 2's error probability is highly non-uniform across samples, which can
    happen when model 2 is a \emph{specialist} model, 
    (2) labels are subject to noise,
    and
    (3) there is distribution shift between the train and test set.
    
    \item Motivated by this, 
    we then study a series of \emph{post-hoc} deferral rules, that seek to mimic the form of the optimal deferral rule (\S\ref{sec:post_hoc}).
    We show that post-hoc deferral can significantly improve upon confidence-based deferral 
    in the aforementioned settings.
\end{enumerate}

To the best of our knowledge, this is the first work that precisely identifies 
specific practical problems settings where confidence-based deferral can be sub-optimal for cascades. 
Our findings give insights on when it is appropriate to deploy a confidence-based cascade model in practice.

\section{Background and Related Work}

Fix an instance space $\XCal$ and label space $\YCal = [ L ] \defEq \{ 1, 2, \ldots, L \}$,
and let $\Pr$ be a distribution over $\XCal \times \YCal$.
Given a training sample $S = \{ ( x_n, y_n ) \}_{n \in [N]}$ drawn from $\Pr$,
multi-class classification seeks a \emph{classifier} $h \colon \XCal \to {\YCal}$ with low \emph{misclassification error} $R( h ) = \Pr( y \neq h( x ) )$.
We may parameterise $h$ as
$h( x ) = \argmax_{{y}' \in \YCal} f_{{y}'}( x )$
for a \emph{scorer} $f \colon \XCal \to \Real^{L}$.
In neural models,
one expresses $f_y( x ) = w_y^\top \Phi( x )$ for class weights $w_y$ and embedding function $\Phi$.
It is common to construct a \emph{probability estimator} $p \colon \XCal \to \Delta^L$ using the softmax transformation, $p_y( x ) \propto \exp( f_y( x ) )$.

\subsection{Cascades and Deferral Rules}

Conventional neural models involve a fixed inference cost for any test sample $x \in \XCal$.
For large models, this cost may prove prohibitive.
This has motivated 
several approaches to \emph{uniformly} lower the inference cost for \emph{all} samples,
such as 
architecture modification~\citep{Larsson:2017,Veniat:2018,Huang:2018,Vu:2020},
stochastic depth~\citep{Huang:2016,Fan:2020},
network sparsification~\citep{Liu:2015},
quantisation~\citep{Hubara:2017},
pruning~\citep{LeCun:1989},
and distillation~\citep{Bucilua:2006,Hinton:2015,Rawat:2021}.
A complementary strategy is to 
\emph{adaptively} lower the inference cost for \emph{``easy''} samples,
while reserving the full cost only for ``hard'' samples~\citep{Han:2022,Zilberstein:1996,Scardapane:2020,Panda:2016,Teerapittayanon:2016,Wang:2018,Huang:2018,Schwartz:2020,Xin:2020,Elbayad:2020,Liu:2020,Zhou:2020,Xin:2020b}.

Cascade models
are a classic example of 
adaptive predictors,
which 
have proven useful in vision tasks such as object detection~\citep{Viola:2001},
and have grown increasingly popular in natural language processing~\citep{Mamou:2022,Varshney:2022,Khalili:2022,Dohan:2022}.
In the vision literature,
cascades are often designed for binary classification problems, 
with lower-level classifiers being used to quickly identify negative samples~\citep{Brubaker:2007,Zhang:2008,Saberian:2010,Chen:2012,DeSalvo:2015,Streeter:2018}.
A cascade is composed of two components: 
\begin{enumerate}[label=(\roman*), itemsep=0pt,topsep=0pt,leftmargin=16pt]
    \item a collection of base models (typically of non-decreasing inference cost)
    \item a \emph{deferral rule} 
    (i.e., a function that decides which model to use for each $x \in \XCal$).
\end{enumerate}

For any test input, 
one executes the first model, and employs the {deferral rule} to determine whether to terminate with the current model's prediction, or to invoke the next model;
this procedure is repeated until one terminates, or reaches the final model in the sequence.
Compared to using a single model, the goal of the cascade is to offer comparable predictive accuracy, but lower average inference cost.

\removefornow{
Typically, the deferral rule is based purely on the current model's predictions.
Formally, let $h^{(1)}, \ldots, h^{(K)}$ denote a sequence of $K$ classifiers, and $r \colon \XCal \to \{ 0, 1 \}$ a deferral rule.
Let $k_{\rm acc} \defEq \{ k \in [K - 1] \colon r( h^{(k)}( x ) ) = 0 \}$ be the set of indices where $r$ recommends to not defer (i.e., terminate).
Then, the cascade classifier $h^{\rm cas} \colon \XCal \to \YCal$ predicts
\begin{align*}
    h^{\rm cas}( x ) = h^{k^*}( x )
    \text{ where }
    k^* &\defEq \begin{cases} \min k_{\rm acc} & \text{ if } k_{\rm acc} \neq \emptyset \\ K & \text{else.} \end{cases} %
\end{align*}
} %

While the base models and deferral rules may be trained jointly~\citep{Trapeznikov:2013,Kag:2023}, 
we focus on a setting where
the base models are \emph{pre-trained and fixed}, and the goal is to train only the deferral rule.
This setting is practically relevant, 
as it is often desirable to re-use powerful models that involve expensive training procedures.
Indeed, a salient feature of cascades is their ability to leverage off-the-shelf models
and simply adjust the desired operating point 
(e.g., the rate at which we call a large model).

\subsection{Confidence-based Cascades}
A simple way to define a deferral rule 
is by thresholding a model's \emph{confidence} in its prediction.
\begin{wrapfigure}{R}{0.5\textwidth}
    \vspace{-\baselineskip}
    \begin{minipage}[t][5cm]{0.5\textwidth}
        \begin{algorithm}[H]
            \caption{Confidence-based cascades of $K$ classifiers}
            \begin{algorithmic}[1]
                \REQUIRE $K \ge 2$ classifiers: $p^{(1)}, \ldots, p^{(K)} \colon \XCal \to \Delta^L$, 
                thresholds $c^{(1)}, \ldots, c^{(K-1)} \in [0, 1]$
                \REQUIRE An input instance $x \in \XCal$
                \item[]
                \FOR {$k=1, 2, \ldots, K-1$} 
                \IF{$\max_{y'} p^{(k)}_{y'}(x) > c^{(k)}$} 
                \STATE Predict class $\hat{y} = \argmax_{y'} p^{(k)}_{y'}(x)$
                \STATE \textbf{break}
                \ENDIF
                \ENDFOR
                \STATE Predict class $\hat{y} = \argmax_{y'} p^{(K)}_{y'}(x)$
            \end{algorithmic}
            \label{algo:chow}
        \end{algorithm}
    \end{minipage}
\end{wrapfigure}
While there are several means of quantifying and improving such confidence~\citep{Shafer:2008,Guo:2017,Kendall:2017,Jiang:2018}, 
we focus on
the \emph{maximum predictive probability} $\gamma( x ) \defEq \max_{y'} p( y' \mid x )$.
Specifically, given $K$ trained models, 
a confidence-based cascade is formed by picking the first model that whose confidence 
$\gamma( x )$ is sufficiently high~\citep{wang2022wisdom,Ramaswamy:2018}.
This is made precise in~\Cref{algo:chow}. 

Forming a cascade in this manner is appealing because it does not require retraining of any models or the deferral rule.
Such a post-hoc approach has been shown to give good trade-off between
accuracy and inference cost \citep{wang2022wisdom}.
Indeed, it has often been noted that such deferral can perform on-par with more complex modifications to the model training~\citep{Geifman:2019,Gangrade:2021,Kag:2023}.
However, the reasons for this phenomenon are not well-understood; further, it is unclear if there are practical settings where confidence-based cascades are expected to underperform.

To address this, 
we now formally analyse confidence-based cascades, and explicate their limitations.

\section{Optimal Deferral Rules for Cascades}
\label{sec:deferral}
In this section, 
we derive the oracle (Bayes-optimal) deferral rule, which will
allow us to understand the effectiveness and limitations of confidence-based deferral (\Cref{algo:chow}).

\subsection{Optimisation Objective}
For simplicity, we consider a cascade of $K=2$ pre-trained classifiers $h^{(1)},h^{(2)}\colon \XCal \to \YCal$;
our analysis readily generalises to cascades with $K > 2$ models (see Appendix~\ref{sec:multi_model_cascades}).
Suppose that the classifiers
are based on probability estimators $p^{(1)}, p^{(2)}$,
where for $i\in\{1,2\}$,
$p_{y'}^{(i)}(x)$ estimates the probability
for class $y'$ given $x$,
and
$h^{(i)}(x)\defEq\arg\max_{y'}p_{y'}^{(i)}(x)$.
We do not impose any restrictions on the training procedure or performance of these classifiers. 
We seek to learn a deferral rule 
$r \colon \XCal \to \{ 0, 1 \}$
that can decide
whether $h^{(1)}$ should be used (with $r(x)=0$),
or $h^{(2)}$ (with $r(x)=1$). 

\textbf{Constrained formulation}. 
To derive the optimal deferral rule, we must first specify a target metric to optimise.
Recall that cascades offer a suitable balance between average inference cost, and predictive accuracy.
Let us assume without loss of generality that $h^{(2)}$ has higher computational cost,
and that invoking $h^{(2)}$ incurs a constant cost $c > 0$. 
We then seek to find a deferral rule $r$ that maximises the predictive accuracy,
while invoking $h^{(2)}$ sparingly. This can be formulated as the following constrained optimisation problem:
\begin{equation}
\min_{r}
 \Pr(y\neq h^{(1)}(x),r(x)=0) +
 \Pr(y\neq h^{(2)}(x),r(x)=1)
 \quad \text{subject to} \quad 
 \Pr(r(x)=1) \leq \tau,
 \label{eq:constrained_opt}
\end{equation}
for a deferral rate $\tau \in [0,1 ]$. %
Intuitively, the objective measures the misclassification error only on samples where the respective classifier's predictions are used;  the constraint ensures that $h^{(2)}$ is invoked on at most $\tau$ fraction of samples. 
It is straight-forward to extend this formulation to generic cost-sensitive variants of the predictive accuracy~\citep{Elkan:2001}.

\textbf{Equivalent unconstrained risk}. Using Lagrangian theory, one may translate $\eqref{eq:constrained_opt}$ into minimizing an equivalent unconstrained risk. Specifically, 
under mild distributional assumptions (see e.g.,  \citet{neyman1933ix}), we can show that for any deferral rate $\tau$, there exists a  deferral cost $c \ge 0$ such that solving $\eqref{eq:constrained_opt}$ is equivalent to minimising the following unconstrained risk:
\begin{align}
 & R(r;h^{(1)},h^{(2)}) = \Pr(y\neq h^{(1)}(x),r(x)=0)
+
 \Pr(y\neq h^{(2)}(x),r(x)=1)+c \cdot \Pr(r(x)=1).
 \label{eq:oracle_risk_k2}
\end{align}

\textbf{Deferral curves and cost-risk curves}. 
As $c$ varies, one may plot the misclassification error of the resulting cascade as a function of deferral rate. We will refer to this as the \emph{deferral curve}.  
One may similarly compute the  \emph{cost-risk curve} that traces out the optimal cascade risk  \eqref{eq:oracle_risk_k2} as a function of $c$. Both these are equivalent ways of assessing the overall quality of a cascade. %

In fact, the two curves have an intimate point-line duality. 
Any point $(\tau, E)$ in deferral curve space  – where $\tau$ denotes deferral rate, and $E$ denotes  error – can be mapped to the line ${ (c, c \cdot \tau + E) : c \in [0, 1] }$ in cost curve space. 
Conversely, any point $(c, R)$ in cost curve space – where $c$ denotes  cost, and $R$ denotes risk – can be mapped to the line ${ (d, R - C \cdot d) : d \in [0, 1] }$ in deferral curve space. This is analogous to the correspondence between ROC and cost-risk curves in classification \citep{drummond2006cost}.
Following prior work \citep{Bolukbasi:2017,Kag:2023}, our experiments use deferral curves to compare methods.

\subsection{The Bayes-Optimal Deferral Rule}
\label{sec:bayes_optimal}
The \emph{Bayes-optimal} rule,
which minimises the risk $R(r;h^{(1)},h^{(2)})$ over all possible deferral rules $r \colon \XCal \to \{ 0, 1 \}$,
is given below.

\begin{proposition}
\label{prop:oracle_k2} 
Let $\eta_{y'}(x)\defEq\Pr(y'|x)$.
Then,
the Bayes-optimal deferral rule for the risk
in (\ref{eq:oracle_risk_k2}) is:
\begin{align}
r^{*}(x) & =\boldsymbol{1}\left[\eta_{h^{(2)}(x)}(x)-\eta_{h^{(1)}(x)}(x)>c\right],
\label{eq:optimal_rule}
\end{align}
\end{proposition}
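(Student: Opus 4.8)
The plan is to recognise the risk in~(\ref{eq:oracle_risk_k2}) as an expectation over $x$ of a per-instance cost, and then to minimise that cost pointwise for each $x$ independently. Since a deferral rule $r$ is free to take the value $0$ or $1$ at each $x$ with no coupling across instances, the collection of pointwise minimisers assembles into a global minimiser of the population risk. This is precisely the standard Bayes-optimality argument, specialised to the two-choice deferral cost.

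First I would condition on $x$ and use the tower rule. Noting that $\Pr(y \neq h^{(i)}(x) \mid x) = 1 - \eta_{h^{(i)}(x)}(x)$, and that for a binary-valued $r$ we have $\boldsymbol{1}[r(x)=0] = 1 - r(x)$ and $\boldsymbol{1}[r(x)=1] = r(x)$, the risk rewrites as
\begin{align}
R(r;h^{(1)},h^{(2)}) = \E_x\!\left[ (1-r(x))\big(1-\eta_{h^{(1)}(x)}(x)\big) + r(x)\big(1-\eta_{h^{(2)}(x)}(x)+c\big) \right].
\end{align}
The integrand depends on $r$ only through the single value $r(x) \in \{0,1\}$, so the expectation is minimised by minimising the integrand separately at each $x$.

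Next, for each fixed $x$ I would compare the two admissible choices. Deferring ($r(x)=1$) incurs per-instance cost $1 - \eta_{h^{(2)}(x)}(x) + c$, while not deferring ($r(x)=0$) incurs $1 - \eta_{h^{(1)}(x)}(x)$. The former is strictly smaller exactly when $\eta_{h^{(2)}(x)}(x) - \eta_{h^{(1)}(x)}(x) > c$, which is the claimed rule~(\ref{eq:optimal_rule}). Since choosing $r(x)$ to minimise the integrand at every $x$ simultaneously minimises the expectation, $r^{*}$ attains the minimum of $R(r;h^{(1)},h^{(2)})$ over all rules, establishing Bayes-optimality.

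There is no substantial obstacle here; the main points requiring care are routine. One is justifying that optimising inside the expectation is legitimate, \ie that a measurable selector attaining the pointwise minimum exists and is itself a valid deferral rule; this holds because the comparison depends measurably on $x$ through $\eta$ and the induced predictions $h^{(i)}$. The other is the tie case $\eta_{h^{(2)}(x)}(x) - \eta_{h^{(1)}(x)}(x) = c$, where both choices yield identical per-instance cost, so the strict inequality in~(\ref{eq:optimal_rule}) is merely a convention and any tie-breaking gives the same risk. I would also remark that the per-instance costs involve only the true posterior $\eta$ and the fixed predictions $h^{(i)}$, so the result holds regardless of how the estimators $p^{(1)}, p^{(2)}$ were trained.
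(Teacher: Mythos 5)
Your proof is correct and follows essentially the same route as the paper's: both condition on $x$, rewrite the risk as an expectation of a per-instance cost in $r(x)$, and minimise pointwise, yielding the defer condition $\eta_{h^{(2)}(x)}(x)-\eta_{h^{(1)}(x)}(x)>c$. The only cosmetic difference is that the paper factors out the constant $\Pr(y\neq h^{(1)}(x))$ so that only the $r(x)=1$ branch carries a coefficient, whereas you keep both branches explicit and compare them directly; your added remarks on measurability and tie-breaking are sound but inessential.
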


(Proof in \Cref{sec:proof}.)
Note that for a classifier $h$, $\eta_{h(x)}(x)=\Pr(y=h(x)|x)$ i.e.,
the probability that $h$ gives a correct prediction for $x$. 
Observe that the randomness here reflects the inherent stochasticity of the labels $y$ for an input $x$, i.e.,
the aleatoric uncertainty~\citep{Hullermeier:2021}.
For the purposes of evaluating the deferral curve, 
the key quantity is $\eta_{h^{(2)}(x)}(x)-\eta_{h^{(1)}(x)}(x)$,
i.e.,  the \emph{difference in the probability of correct prediction under each classifier}.
This is intuitive:
it is optimal to defer to $h^{(2)}$ 
if the expected reduction in misclassification error exceeds the cost of invoking $h^{(2)}$. 

The Bayes-optimal deferral rule in~\eqref{eq:optimal_rule} is a theoretical construct,
which relies on knowledge of the true posterior probability $\eta$.
In practice, one is likely to use an approximation to this rule.
To quantify the effect of such an approximation, we may consider the \emph{excess risk} or \emph{regret}
of an arbitrary deferral rule $r$ over $r^*$.
We have the following.

\begin{corollary}
\label{corr:excess-risk}
Let $\alpha( x ) \defEq \eta_{h_1(x)}(x) - \eta_{h_2(x)}(x) + c$.
Then, the excess risk for an arbitrary $r$ is
$$R( r; h^{(1)}, h^{(2)} ) - R( r^*; h^{(1)}, h^{(2)} ) = \mathbb{E}_x[ ( \boldsymbol{1}(r(x) = 1) - \boldsymbol{1}(\alpha(x) < 0) ) \cdot \alpha(x) ]. $$
\end{corollary}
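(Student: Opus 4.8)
The plan is to reduce both risks to a single pointwise expectation over $x$, after which their difference collapses to one expectation and all $r$-independent terms cancel. The entire argument is an algebraic rearrangement of the risk in~(\ref{eq:oracle_risk_k2}); the key observation is that, up to a constant, $R(r;h^{(1)},h^{(2)})$ depends on $r$ only through $\mathbb{E}_x[\boldsymbol{1}(r(x)=1)\,\alpha(x)]$.

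First I would rewrite $R(r; h^{(1)}, h^{(2)})$ as an expectation over $x$. Conditioning on $x$ and using $\Pr(y \neq h^{(i)}(x) \mid x) = 1 - \eta_{h^{(i)}(x)}(x)$, together with $\Pr(r(x)=1) = \mathbb{E}_x[\boldsymbol{1}(r(x)=1)]$, each of the three terms in~(\ref{eq:oracle_risk_k2}) becomes the expectation of an indicator against a pointwise coefficient. Next I would collect the entire dependence on $r$ into the coefficient of $\boldsymbol{1}(r(x)=1)$ via $\boldsymbol{1}(r(x)=0) = 1 - \boldsymbol{1}(r(x)=1)$: the model-1 term $1-\eta_{h^{(1)}(x)}(x)$ splits into an $r$-free part and a part multiplying $\boldsymbol{1}(r(x)=1)$, and combining the latter with the model-2 misclassification and cost terms, the coefficient of $\boldsymbol{1}(r(x)=1)$ simplifies to exactly $(1-\eta_{h^{(2)}(x)}(x)) + c - (1-\eta_{h^{(1)}(x)}(x)) = \eta_{h^{(1)}(x)}(x) - \eta_{h^{(2)}(x)}(x) + c = \alpha(x)$. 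This yields $R(r; h^{(1)}, h^{(2)}) = \mathbb{E}_x[1 - \eta_{h^{(1)}(x)}(x)] + \mathbb{E}_x[\boldsymbol{1}(r(x)=1)\,\alpha(x)]$, where the first term does not depend on $r$.

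I would then subtract the same identity applied to $r^*$: the $r$-free term cancels, leaving $R(r) - R(r^*) = \mathbb{E}_x[(\boldsymbol{1}(r(x)=1) - \boldsymbol{1}(r^*(x)=1))\,\alpha(x)]$. Finally, by \Cref{prop:oracle_k2} we have $r^*(x)=1$ iff $\eta_{h^{(2)}(x)}(x) - \eta_{h^{(1)}(x)}(x) > c$, which is precisely $\alpha(x) < 0$; substituting $\boldsymbol{1}(r^*(x)=1) = \boldsymbol{1}(\alpha(x)<0)$ gives the claimed formula.

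There is essentially no substantive obstacle, since the result is a rearrangement rather than an estimate; the only points requiring care are (a) verifying that the coefficient of $\boldsymbol{1}(r(x)=1)$ collapses to \emph{exactly} $\alpha(x)$ after combining the misclassification and cost contributions, and (b) the tie-breaking on the set $\{\alpha(x)=0\}$: the strict inequality in \Cref{prop:oracle_k2} sets $r^*(x)=0$ there, which agrees with $\boldsymbol{1}(\alpha(x)<0)=0$, and in any case the integrand vanishes on this set, so the convention is immaterial.
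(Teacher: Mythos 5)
Your proposal is correct and follows essentially the same route as the paper: both reduce the risk to the decomposition $R(r;h^{(1)},h^{(2)}) = \Pr(y\neq h^{(1)}(x)) + \mathbb{E}_x\bigl[\boldsymbol{1}(r(x)=1)\,\alpha(x)\bigr]$ (which the paper establishes in the proof of Proposition~\ref{prop:oracle_k2} and re-uses), then subtract the same identity evaluated at $r^*$ and substitute $\boldsymbol{1}(r^*(x)=1)=\boldsymbol{1}(\alpha(x)<0)$. Your remark that the set $\{\alpha(x)=0\}$ contributes nothing to the integrand is a small additional point of care not spelled out in the paper, but the argument is otherwise the same.
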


Intuitively, the above shows that when we  make deferral decisions that disagree with the Bayes-optimal rule,
we are penalised proportional to the difference between the two models' error probability.

\subsection{Plug-in Estimators of the Bayes-Optimal Deferral Rule}
\label{sec:plug_in_rule}

In practice, we may seek to approximate the Bayes-optimal deferral rule $r^*$ in~\eqref{eq:optimal_rule} with an estimator $\hat{r}$.
We now present several \emph{oracle estimators},
which will prove useful in our subsequent analysis.

\textbf{One-hot oracle}.
Observe that $\eta_{y'}( x ) = \mathbb{E}_{y \mid x}\left[ \boldsymbol{1}[y = y'] \right]$.
Thus,
given a test sample 
$(x,  y)$, 
one 
may replace the expectation with the observed label 
$y$
to yield the
ideal estimator 
$\hat{\eta}_{y'}(x) = \boldsymbol{1}[ y' = y ]$.
This 
results in the rule
\begin{align}
\hat{r}_{01}(x) & \defEq \boldsymbol{1}\left[ \boldsymbol{1}[y=h^{(2)}(x)] - \boldsymbol{1}[y=h^{(1)}(x)] > c \right].
\label{eq:optimal_rule_onehot}
\end{align}
One intuitive observation is that for high $c$, 
this rule only defers samples with $y \neq h^{(1)}( x )$ but $y = h^{(2)}( x )$,
i.e., \emph{samples where the first model is wrong, but the second model is right}.
Unfortunately, this rule is impractical, since it depends on the label $y$. 
Nonetheless, it 
serves as an oracle to help understand what one can gain
if we knew exactly whether the downstream model makes an error.

\textbf{Probability oracle}.
Following a similar reasoning as $\hat{r}_{{\rm 01}}$, another estimator is given by
\begin{align}
\text{ }\hat{r}_{{\rm prob}}(x) & \defEq
\boldsymbol{1}\left[ p_y^{(2)}(x) - p_{y}^{(1)}(x) > c\right].
\label{eq:rule_prob_oracle}
\end{align}
Intuitively $ p_y^{(2)}(x)$ can be seen as a label-dependent 
correctness score of model 2 on an instance $x$.

\textbf{Relative confidence}.
The above oracles rely on the true label $y$.
A more practical plug-estimator is $\hat{\eta}_{h^{(i)}( x )}( x ) = \max_{y'} p^{(i)}_{y'}( x )$,
which simply uses each model's softmax probabilities.
The rationale for this rests upon the assumption that the probability
model $p^{(i)}$ is a consistent estimate of the true posterior probability
so that $\Pr(y|x)\approx p_{y}^{(i)}(x)$ for $i\in\{1,2\}$, where $(x,y)$ is
a labeled example. 
Thus,
$\eta_{h^{(i)}(x)}(x)=\Pr(h^{(i)}(x)|x)\approx p^{(i)}(h^{(i)}(x)|x)=\max_{y'}p_{y'}^{(i)}(x)$,
resulting in the rule 
\begin{align}
\text{ }\hat{r}_{{\rm rel}}(x) & \defEq1\left[\max_{y''}p_{y''}^{(2)}(x)-\max_{y'}p_{y'}^{(1)}(x)>c\right].\label{eq:rule_diff_conf}
\end{align}
Observe that this
deferral decision depends on the confidence
of \emph{both} models, in contrast to confidence-based deferral which relies only
on the confidence of the \emph{first} model.

Note that the above oracles cannot be used directly for adaptive computation, because  the second model is invoked on \emph{every} input.
Nonetheless, they  can inform us
about the available headroom to improve over confidence-based deferral
by
considering the confidence of the downstream model.
As shall be seen in \Cref{sec:conf_vs_oracle}, these estimators are useful for deriving objectives to train a post hoc deferral rule.

\subsection{Relation to Existing Work}
\label{sec:related_optimal}

The two-model cascade 
is closely connected to the literature on 
\emph{learning to defer to an expert}~\citep{Madras:2018,Mozannar:2020,charusaie22a}.
Here, the goal is to learn a base classifier $h^{(1)}$ that has the option of invoking an ``expert'' model $h^{(2)}$;
this invocation is controlled by a deferral rule $r$.
Indeed, the risk~\eqref{eq:oracle_risk_k2} is a special case of~\citet[Equation 2]{Mozannar:2020}, where the second model is considered to be an ``expert''.
Proposition~\ref{prop:oracle_k2} is a simple generalisation of~\citet[Proposition 2]{Mozannar:2020}, with the latter assuming $c = 0$.
In \Cref{sec:multi_model_cascades}, we generalise \Cref{prop:oracle_k2} to the cascades
of $K > 2$ models.

\section{From Confidence-Based to Post-Hoc Deferral}
\label{sec:conf_vs_oracle}

Having presented the optimal deferral rule in Proposition~\ref{prop:oracle_k2},
we now use it to explicate some 
failure modes for confidence-based deferral,
which will be empirically demonstrated in~\S\ref{sec:chow_suffices}.

\subsection{When Does Confidence-Based Deferral Suffice?}
\label{sec:chow-versus-oracle}

Suppose as before we have probabilistic models $p^{(1)}, p^{(2)}$.
Recall from Algorithm~\ref{algo:chow} that for constant $c^{(1)} > 0$,
confidence-based deferral employs
the rule
\begin{equation}
\label{eqn:confidence_deferral}
    \hat{r}_{\rm conf}( x ) = \boldsymbol{1}\left[ \max_{y'} p^{(1)}_{y'}( x ) < c^{(1)} \right].
\end{equation}
Following~\S\ref{sec:plug_in_rule},~\eqref{eqn:confidence_deferral} may be regarded as a
plug-in estimator for the ``population confidence'' rule 
$r_{\rm conf}( x ) \defEq \boldsymbol{1}\left[ \eta_{h^{(1)}}( x ) < c^{(1)} \right]$.
Contrasting this to Proposition~\ref{prop:oracle_k2},
we have the following:

\begin{lemma}
    \label{lemm:when-confidence-suffice}
    Assume that for any $x,x' \in \XCal$, $\eta_{h^{(1)}}( x ) \le \eta_{h^{(1)}}( x' )$ if and only if $\eta_{h^{(1)}}( x ) - \eta_{h^{(2)}}( x ) \le \eta_{h^{(1)}}( x' ) - \eta_{h^{(2)}}( x' ) $ (i.e., $\eta_{h^{(1)}}( x )$ and $\eta_{h^{(1)}}( x ) - \eta_{h^{(2)}}( x )$ produce the same ordering over instances $x \in \XCal$).
    Then,  the deferral rule 
    $r_{\rm conf}$
    produces the same \emph{deferral curve} as the Bayes-optimal rule~\eqref{eq:optimal_rule}.
    
\end{lemma}    

Lemma~\ref{lemm:when-confidence-suffice} studies the agreement between the \emph{deferral curves} of 
$r_{\rm conf}$ and the Bayes-optimal solution,
which eliminates the need for committing to a specific cost $c^{(1)}$.
The lemma has an intuitive interpretation:
population confidence-based deferral is optimal
if and only if the \emph{absolute} confidence in model 1's prediction agrees with the \emph{relative} confidence is model 1 versus model 2's prediction.

Based on this, we now detail some cases where confidence-based deferral succeeds or fails.

\textbf{Success mode: expert $h^{(2)}$}.
Lemma~\ref{lemm:when-confidence-suffice} has one immediate, intuitive consequence:
\emph{confidence-based deferral 
is optimal when
the downstream model has a constant error probability},
i.e., $\eta_{h_2(x)}( x )$ is a constant for all $x \in \XCal$.
This may happen, 
e.g.,
if that the labels are deterministic given the inputs, and the second classifier $h^{(2)}$ perfectly predicts them.
Importantly, note that this is a sufficient (but \emph{not} necessary) condition for the optimality of confidence-based deferral.

\textbf{Failure mode: specialist $h^{(2)}$}.
As a converse to the above,
one setting where confidence-based deferral may fail is when
when the downstream model is a \emph{specialist},
which performs well only on a particular sub-group of the data (e.g., a subset of classes).
Intuitively,
confidence-based deferral may 
\emph{erroneously forward samples where $h^{(2)}$ performs worse than $h^{(1)}$}.

Concretely, 
suppose there is a data sub-group  $\XCal_{\rm good} \subset \XCal$ where $h^{(2)}$ 
performs exceptionally well,
i.e.,
$\eta_{h^{(2)}(x)} \approx 1$ when $x \in \XCal_{\rm good}$.
On the other hand,
suppose
$h^{(2)}$ does not perform well on $\XCal_{\rm bad} \defEq \XCal \setminus \XCal_{\rm good}$,
i.e.,
$\eta_{h^{(2)}(x)} \approx 1/L$ when $x \in \XCal_{\rm bad}$.
Intuitively, 
while $\eta_{h^{(1)}(x)}$ may be relatively low for $x \in \XCal_{\rm bad}$,
it is strongly desirable to \emph{not} defer such examples,
as $h^{(2)}$ performs even worse than $h^{(1)}$;
rather, it is preferable to identify and defer samples $x \in \XCal_{\rm good}$.

\textbf{Failure mode: label noise}.
Confidence-based deferral can fail when there are high levels of label noise.
Intuitively, in such settings,
confidence-based deferral may \emph{wastefully forward samples where $h^{(2)}$ performs no better than $h^{(1)}$}.
{Concretely, 
suppose that instances $x \in \XCal_{\rm bad} \subset \XCal$
may be mislabeled as one from a different, random class.
For $x  \in \XCal_{\rm bad}$, regardless of how the two models  $h^{(1)}, h^{(2)}$ perform, we have 
$\eta_{h^{(1)}(x)}(x), \eta_{h^{(2)}(x)}(x) = 1/L$ (i.e., the accuracy of classifying these instances is chance level in expectation). Since $\eta_{h^{(1)}(x)}(x)$ is low, confidence-based deferral will tend to
defer such input instance $x$. 
However, this is a sub-optimal decision since model 2 is more computationally expensive, and expected to have the same chance-level performance. 
}

\textbf{Failure mode: distribution shift}.
Even when model $h^{(2)}$ is an expert model, an intuitive setting where confidence-based deferral can fail is if there is \emph{distribution shift} between the train and test $\Pr( y \mid x )$~\citep{Quinonero-Candela:2009}.
In such settings, even if $p^{(1)}$ produces reasonable estimates of the \emph{training} class-probability, these may translate poorly to the test set.
There are numerous examples of confidence degradation under such shifts,
such as the presence of \emph{out-of-distribution} samples~\citep{Nguyen:2015,Hendrycks:2017},
and the presence of a \emph{label skew} during training~\citep{Wallace:2012,Samuel:2020}.
We shall focus on the latter in the sequel.

\begin{table*}[!t]
    \centering
    \renewcommand{\arraystretch}{1.25}
    
    \resizebox{\linewidth}{!}{
    \begin{tabular}{@{}llllp{1in}@{}}
        \toprule
        \textbf{Training label $z$} & \textbf{Loss} & \textbf{Deferral rule} & \textbf{Method label}&  \textbf{Comment} \\
        \toprule
        \toprule
         $ 1[y=h^{(2)}(x)] - 1[y=h^{(1)}(x)] $ & Mean-squared error &  $g(x) > c$  & \texttt{Diff-01} &   Regression  \\
         $  p^{(2)}_{y}(x) - p^{(1)}_{y}(x) $ & Mean absolute error &   $g(x) > c$  & \texttt{Diff-Prob} &  Regression \\
         $\max_{y'} p^{(2)}_{y'}(x) $ & Mean-squared error&  $g(x) - \max_{y'} p^{(1)}_{y'}(x) > c$ &  \texttt{MaxProb} & Regression \\         
        \bottomrule
    \end{tabular}
    }

    \caption{Candidate post-hoc estimators of the oracle rule in \eqref{eq:oracle_risk_k2}.
    We train a post-hoc model $g(x)$ on a training set $\{(x_i, z_i)\}_{i=1}^n$ so as
    to predict the label $z$. 
    Here, $(x,y) \in \XCal \times \YCal$ 
    is a labeled example. 
    }
    \label{tbl:post_hoc_estimates}
\end{table*}

\subsection{Post-Hoc Estimates of the Deferral Rule}
\label{sec:post_hoc}

Having established that 
confidence-based deferral may be sub-optimal in certain settings,
we now consider the viability of deferral rules that are \emph{learned} in a post-hoc manner.
Compared to confidence-based deferral, such rules aim to explicitly account for \emph{both} the confidence of model 1 and 2, 
and thus avoid the failure cases identified above.

The key idea behind such post-hoc rules is to directly mimic the optimal deferral rule in~\eqref{eq:optimal_rule}.
Recall that 
this optimal rule
has a dependence on the output of $h^{(2)}$;
unfortunately, querying $h^{(2)}$ defeats the entire purpose of cascades. 
Thus, our goal is to estimate~\eqref{eq:optimal_rule} using only
the outputs of $p^{(1)}$.

We summarise a number of post-hoc estimators in \Cref{tbl:post_hoc_estimates},
which are directly
motivated by the One-hot, Probability, and Relative Confidence Oracle respectively from \Cref{sec:plug_in_rule}.
The first is to 
learn when model 1 is incorrect, and model 2 is correct.
For example, 
given a validation set,
suppose we construct
samples 
$S_{\rm{val}} \defEq \{ ( x_i, z^{(1)}_i ) \}$,
where $z_i^{(1)} = 1[ y = h^{(2)}( x_i )] - 1[y = h^{(1)}( x_i ) ]$.
Then,
we fit 
$$ \min_{g \colon \XCal \to \Real} \frac{1}{| S_{\rm{val}} |} \sum_{( x_i, z^{(1)}_i ) \in S_{\rm val}} \ell( z^{(1)}_i, g( x_i ) ), $$
where, e.g., $\ell$ is the square loss.
The score $g( x )$ may be regarded as the confidence in deferring to model 2.
Similarly, 
a second approach is to
perform regression to predict $z^{(2)}_i = p_y^{(2)}(x_i) - p_y^{(1)}(x_i)$. 
The third approach is to
directly estimate 
$z^{(3)}_i = \max_{y'} p^{(2)}_{y'}( x_i )$ 
using predictions of the first model.

\label{sec:post_hoc_suffice}

As shall be seen in~\S\ref{sec:chow_suffices}, such post-hoc rules can learn to avoid the failure cases for confidence-based deferral identified in the previous section.
However, it is important to note that there are some conditions where such rules may not offer benefits over confidence-based deferral.

\textbf{Failure mode: Bayes $p^{(2)}$}.
Suppose that the model $p^{(2)}$ exactly matches the Bayes-probabilities, i.e.,
$p^{(2)}_{y'}( x ) = \Pr( y' \mid x )$.
Then, estimating $\max_{y'} p^{(2)}_{y'}( x )$ is equivalent to estimating $\max_{y'} \Pr( y' \mid x )$.
However, the goal of model 1 is \emph{precisely} to estimate $\Pr( y \mid x )$.
Thus, if $p^{(1)}$ is sufficiently accurate, in the absence of additional information (e.g., a fresh dataset), it is unlikely that one can obtain a better estimate of this probability than that provided by $p^{(1)}$ itself.
This holds even if the $\Pr( y \mid x )$ is non-deterministic, and so the second model has non-trivial error.

\textbf{Failure mode: non-predictable $p^{(2)}$ error}.
When the model $p^{(2)}$'s outputs are not strongly predictable, 
post-hoc deferral may devolve to regular confidence-based deferral.
Formally, suppose we seek to predict $z \defEq \max_{y'} p^{(2)}_{y'}( x )$, e.g., as in {\tt MaxProb}.
A non-trivial predictor must achieve an average square error smaller than the variance of $z$, i.e.,
$\mathbb{E}[ ( z - \mathbb{E}[ z ] )^2 ]$.
If $z$ is however not strongly predictable, 
the estimate will be tantamount to simply using the constant $\mathbb{E}[ z ]$.
This brings us back to the assumption of model 2 having a constant probability of error,
i.e., confidence-based deferral.

\subsection{Finite-Sample Analysis for Post-Hoc Deferral Rules}
We now formally quantify the gap in performance between the learned post-hoc rule and the Bayes-optimal rule $r^*  = 1[g^*(x) > c]$ in Proposition~\ref{prop:oracle_k2}, where  $g^*(x) = \eta_{h^{(2)}(x)}(x)-\eta_{h^{(1)}(x)}(x)$. 
We will consider the case where the validation sample $S_{\rm val}$ is constructed with labels $z^{(1)}_i$. We pick a scorer $\hat{g}$ that minimises the average squared loss 
$\frac{1}{| S_{\rm{val}} |} \sum_{( x_i, z_i ) \in S_{\rm val}} ( z_i - g( x_i ) )^2$ 
over a hypothesis class $\mathcal{G}$. 
We then construct a deferral rule  $\hat{r}(x) = 1[\hat{g}(x) > c]$. 

\begin{lemma}
\label{lem:finite-sample}
Let   $\mathcal{N}(\mathcal{G}, \epsilon)$ denote the covering number  of $\mathcal{G}$ with the $\infty$-norm. 
Suppose for any $g \in \mathcal{G}$,  $(z - g(x))^2 \leq B, \forall (x, z)$. 
Furthermore, let $\tilde{g}$ denote the minimizer of the population squared loss $\E\left[(z - g(x))^2\right]$ over $\mathcal{G}$, where $z = 1[ y = h^{(2)}( x )] - 1[y = h^{(1)}( x ) ]$. Then 
for any $\delta \in (0,1)$, with probability at least $1 - \delta$ over draw of $S_{\rm val}$, the excess risk for $\hat{r}$ is bounded by
\begin{align*}
\lefteqn{R(\hat{r};h^{(1)},h^{(2)}) - R(r^*;h^{(1)},h^{(2)})}\\[-2pt]
&\leq\,
2\cdot \Bigg(
\underbrace{
    \E_x\left[ (\tilde{g}(x)  -  g^*(x))^2\right]}_{\text{Approximation error}}  + 
    \underbrace{4\cdot\inf_{\epsilon > 0}
    \left\{
        B\sqrt{
            \frac{ 2\cdot\log \mathcal{N}(\mathcal{G}, \epsilon) }{|S_{\rm val}|}
        }
    \right\}
    \,+\,
    \mathcal{O}\left(\sqrt{\frac{\log(1/
    \delta)}{|S_{\rm val}|}}\right)}_{\text{Estimation error}}
\Bigg)^{1/2}.
\end{align*}
\end{lemma}

We provide the proof in \S\ref{app:finite-sample}.
The first term on the right-hand side (RHS) is an irreducible approximation error, quantifying the distance between the best possible model
in the class to the oracle scoring function. 
The second and the third terms on RHS quantify total estimation error.

\subsection{Relation to Existing Work}
As noted in~\Cref{sec:related_optimal}, learning a deferral rule for a two-model cascade is closely related to existing literature in learning to defer to an expert.
This in turn is a generalisation of the classical literature on \emph{learning to reject}~\citep{Hendrickx:2021,cao2022}, which refers to classification settings where one is allowed to abstain from predicting on certain inputs.
The population risk here is a special case of~\eqref{eq:oracle_risk_k2},
where $h^{(2)}( x )$ is assumed to perfectly predict $y$.
The resulting Bayes-optimal classifier is known as Chow's rule~\citep{Cho1970,Ramaswamy:2018},
and exactly coincides with the deferral rule in 
Lemma~\ref{lemm:when-confidence-suffice}.
Plug-in estimates of this rule are thus analogous to confidence-based deferral,
and have been shown to be similarly effective~\citep{Ni:2019}.

In settings where one is allowed to modify the training of $h^{(1)}$, it is possible to construct losses that jointly optimise for both $h^{(1)}$ and $r$~\citep{Bartlett:2008,CorDeSMoh2016,Ramaswamy:2018,Thulasidasan:2019,Charoenphakdee:2021,Gangrade:2021,Kag:2023}.
While effective, these are not applicable in our setting involving pre-trained, black-box classifiers.
Other variants of post-hoc methods have been considered in~\citet{narasimhan2022posthoc}, and implicitly in~\citet{Trapeznikov:2013}; 
however, here we more carefully study the different possible ways of constructing these methods, and highlight when they may fail to improve over confidence-based deferral.

\section{Experimental Illustration}
\label{sec:experiments}
{In this section, we provide empirical evidence to support our analysis in \Cref{sec:chow-versus-oracle} by considering the three failure modes in which confidence-based deferral underperforms. 
For each of these settings, we compute \emph{deferral curves} that plot the classification accuracy versus the fraction of samples deferred to the second model (which implicitly measures the overall compute cost).
In line with our analysis in \Cref{sec:chow-versus-oracle}, post-hoc deferral rules offer better accuracy-cost trade-offs in these settings. 
}

\begin{figure*}[!t]

    \resizebox{0.99\linewidth}{!}{%
        \subcaptionbox{Standard ImageNet\\(100\% non-dog training images)}{
          \includegraphics[scale=0.3]{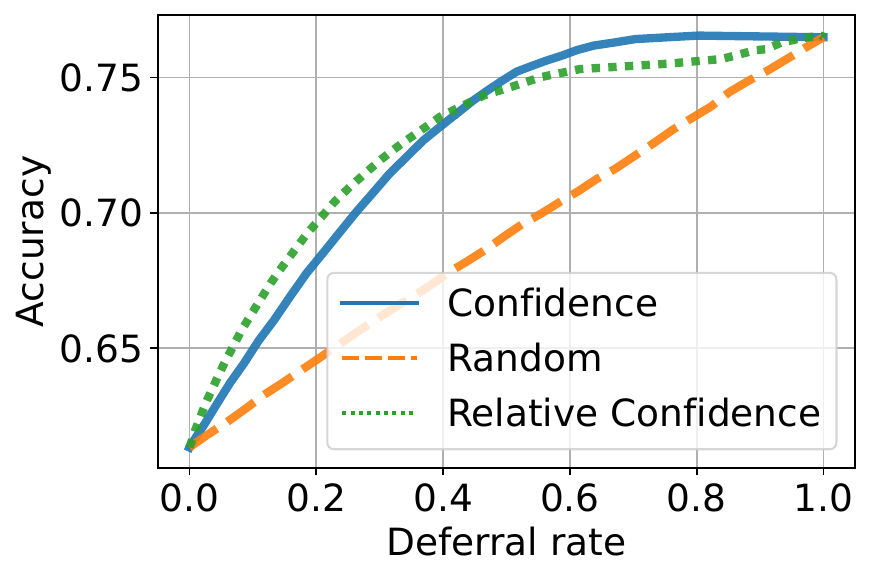}
        }
        \subcaptionbox{4\% non-dog training images}{
          \includegraphics[scale=0.3]{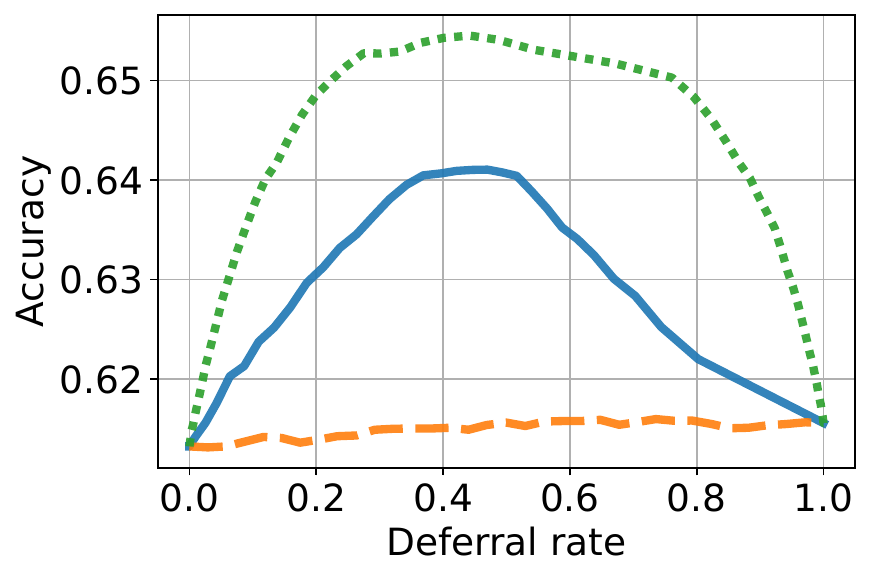}
        }  
        \subcaptionbox{2\% non-dog training images}{
          \includegraphics[scale=0.3]{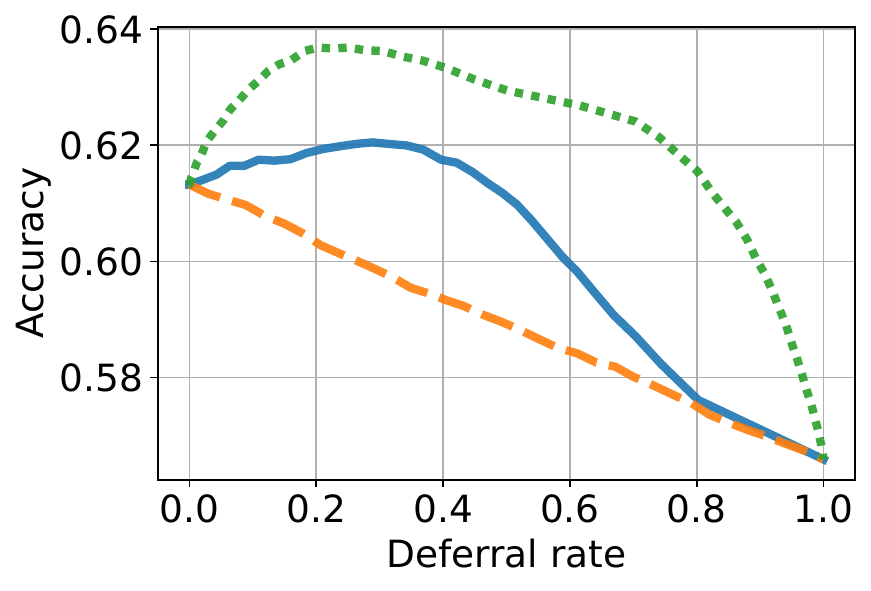}
        }          
    }
    
    \caption{Test accuracy vs deferral rate of plug-in estimates (\Cref{sec:plug_in_rule}) for the oracle rule.
     Here, $h^{(1)}$ is a MobileNet V2 trained on all ImageNet classes, and $h^{(2)}$ is a dog specialist trained on all images in the dog synset plus a fraction of non-dog training examples, which we vary.
     As the fraction decreases, $h^{(2)}$ specialises in classifying different types of dogs. By considering the confidence of $h^{(2)}$ (Relative Confidence), one gains accuracy by selectively deferring only dog images.
    }
    \vspace{-2mm}
    \label{fig:oracle_curves}
\end{figure*}

\subsection{Confidence-Based versus Oracle Deferral}
\label{sec:oracle_experiment}

We begin by
illustrating the benefit of considering confidence of the second model when constructing a deferral rule.
In this experiment, $h^{(1)}$ is a generalist (i.e., trained on all ImageNet classes), and $h^{(2)}$ is a dog specialist trained on all images in the dog synset, plus a fraction of non-dog training examples, which we vary. There are 119 classes in the dog synset. We use  MobileNet V2 \citep{sandler2018mobilenetv2} as $h^{(1)}$, and a larger EfficientNet B0 \citep{tan2019efficientnet} as $h^{(2)}$.  For hyperparameter details, see \Cref{app:hyperparams}.

\Cref{fig:oracle_curves} shows the accuracy of confidence-based deferral (Confidence) and Relative Confidence (\Cref{eq:rule_diff_conf}) on the standard ImageNet test set as a function of the deferral rate. We realise different deferral rates by varying the value of the deferral threshold $c$.
In \Cref{fig:oracle_curves}a, the fraction of non-dog training images is 100\% i.e., model 2 is also a generalist trained on all images. In this case, we observe that Relative Confidence offers little gains over Confidence. 

However, in \Cref{fig:oracle_curves}b and \Cref{fig:oracle_curves}c, 
as the fraction of non-dog training images decreases, the 
non-uniformity of  $h^{(2)}$'s error probabilities increases i.e., $h^{(2)}$ starts to specialise to dog images. 
In line with our analysis in \Cref{sec:chow-versus-oracle}, confidence-based deferral underperforms when model 2's error probability is highly non-uniform.
That is, being oblivious to the fact that  $h^{(2)}$ specialises in dog images,
confidence-based deferral may erroneously defer non-dog images to it.
By contrast, accounting for model 2's confidence, as done by Relative Confidence, shows significant gains.

\subsection{Confidence-Based versus Post-Hoc Deferral}
\label{sec:chow_suffices}

From~\S\ref{sec:oracle_experiment},
one may construct better deferral rules by querying model 2 for its confidence.
Practically, however, querying model 2 at inference time defeats the entire purpose of cascades.
To that end,
we now 
compare 
confidence-based deferral
and 
the post-hoc estimators (\Cref{tbl:post_hoc_estimates}),
which 
do \emph{not} need to invoke model 2 at inference time.
We consider
each of the settings from \Cref{sec:chow-versus-oracle},
and demonstrate that post-hoc deferral can significantly outperform confidence-based deferral.
We present more experimental results in \Cref{app:more_results}, 
where we illustrate post-hoc deferral rules for $K > 2$ models.

\paragraph{Post hoc model training.}
For a post-hoc approach to be practical, the overhead from invoking a post-hoc model must be small relative to the costs of $h^{(1)}$ and $h^{(2)}$.
To this end, in all of the following experiments, the post-hoc model $g\colon \XCal \to \mathbb{R}$
is based on a lightweight, three-layer Multi-Layer Perceptron (MLP) that takes as input the 
probability outputs from model 1. That is, $g(x) = \mathrm{MLP}(p^{(1)}(x))$
where $p^{(1)}(x) \in \Delta_L$ denotes all probability outputs from model 1.
Learning $g$ amounts to learning the MLP as the two base models are fixed.
We train $g$ on a held-out validation set. For full technical details of the post-hoc model architecture and training, see \Cref{app:hyperparams}. We use the objectives described in \Cref{tbl:post_hoc_estimates} to train $g$.

\newcommand{\resizecap}[1]{{\normalsize #1}}
\begin{figure*}[!t]
    \resizebox{0.99\linewidth}{!}{%
        \subcaptionbox{\resizecap{Standard ImageNet.}}{
          \includegraphics[scale=0.33]{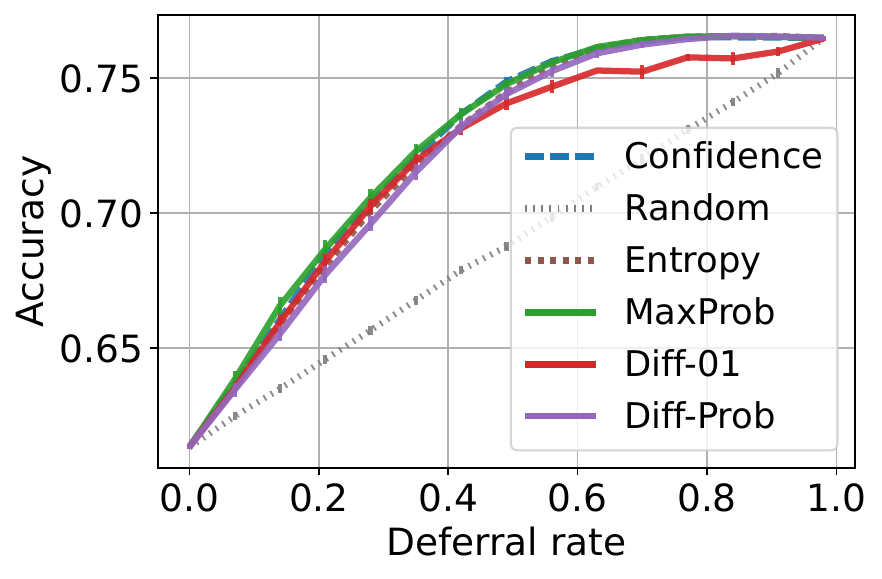}
        }
        \subcaptionbox{\resizecap{4\% non-dog training images.}}{
          \includegraphics[scale=0.33]{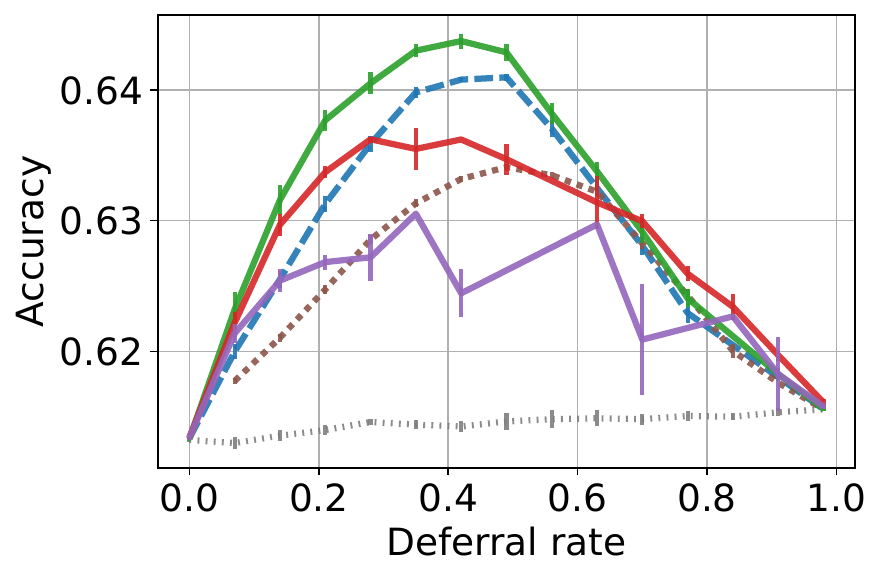}
        }
        \subcaptionbox{\resizecap{2\% non-dog training images.}}{
          \includegraphics[scale=0.33]{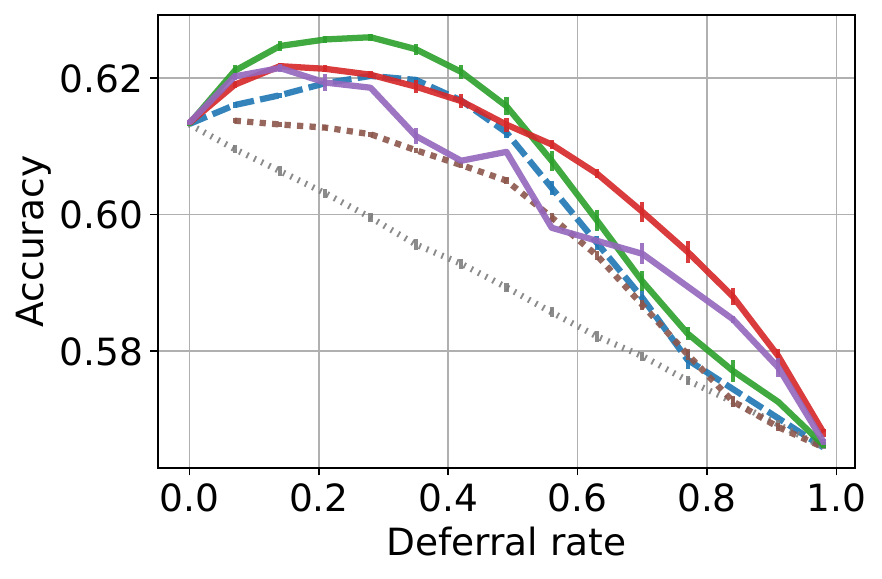}
        }          
    }
    \resizebox{0.99\linewidth}{!}{%
        \subcaptionbox{\resizecap{Standard CIFAR 100.}}{
          \includegraphics[scale=0.33]{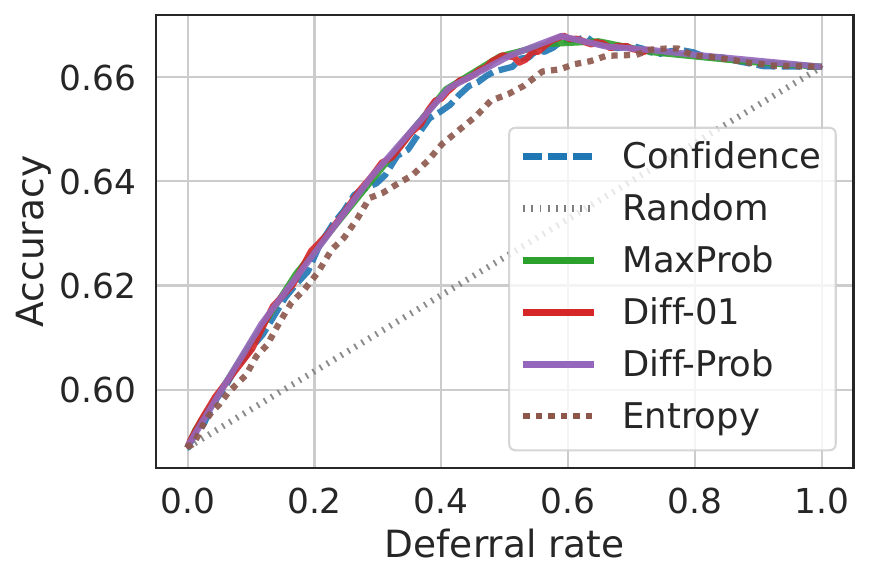}
        }
        \subcaptionbox{\resizecap{Label noise on first 10 classes.}}{
          \includegraphics[scale=0.33]{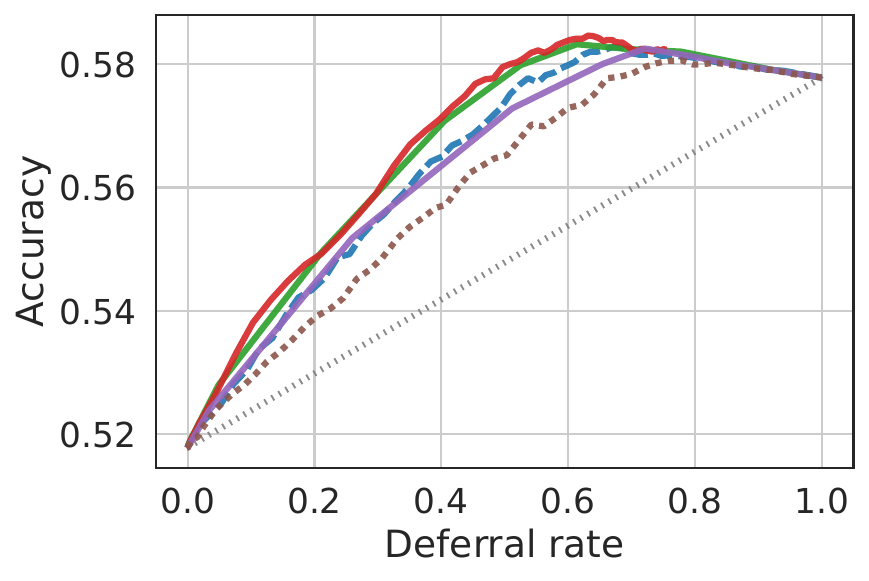}
        }
        \subcaptionbox{\resizecap{Label noise on first 25 classes.}}{
          \includegraphics[scale=0.33]{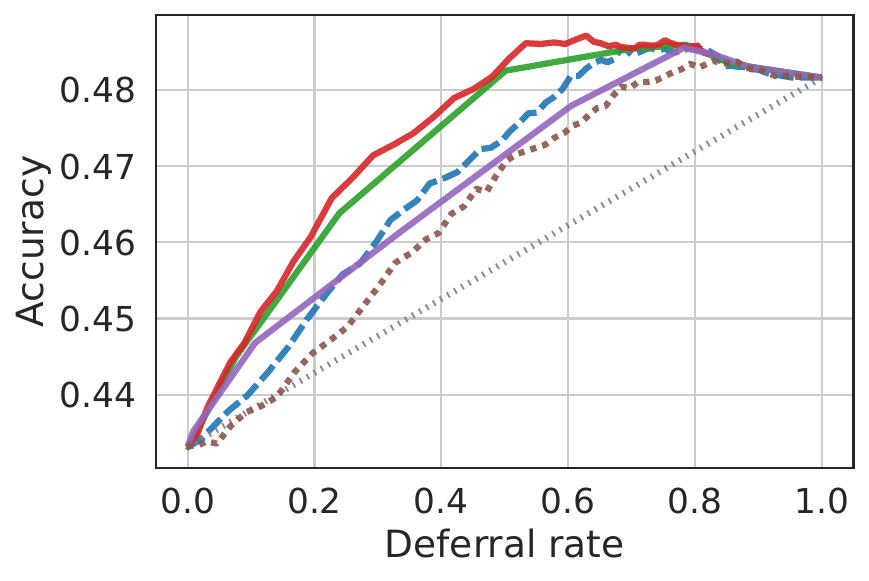}
        }        
    }
    \resizebox{0.99\linewidth}{!}{%
        \subcaptionbox{\resizecap{Standard CIFAR 100.}}{
          \includegraphics[scale=0.33]{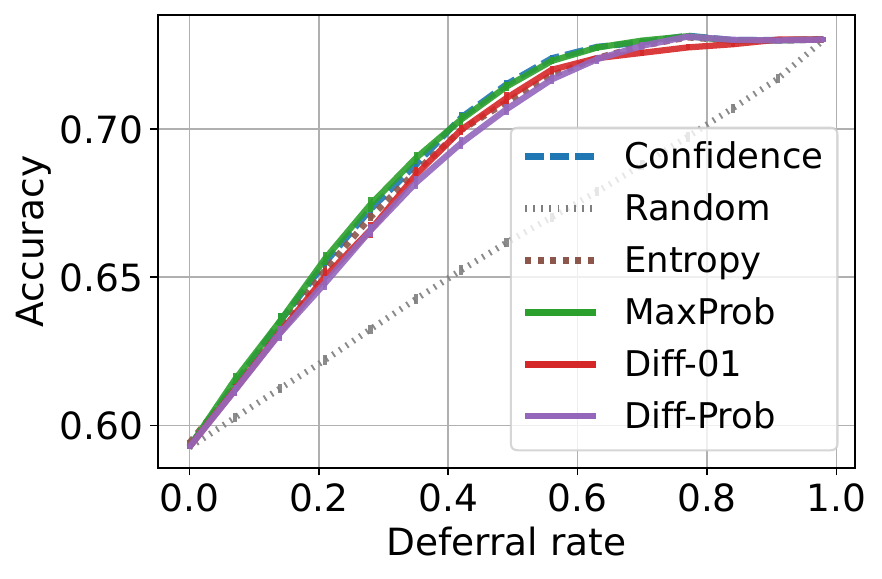}
        }
        \subcaptionbox{\resizecap{Head classes: 50.}}{
          \includegraphics[scale=0.33]{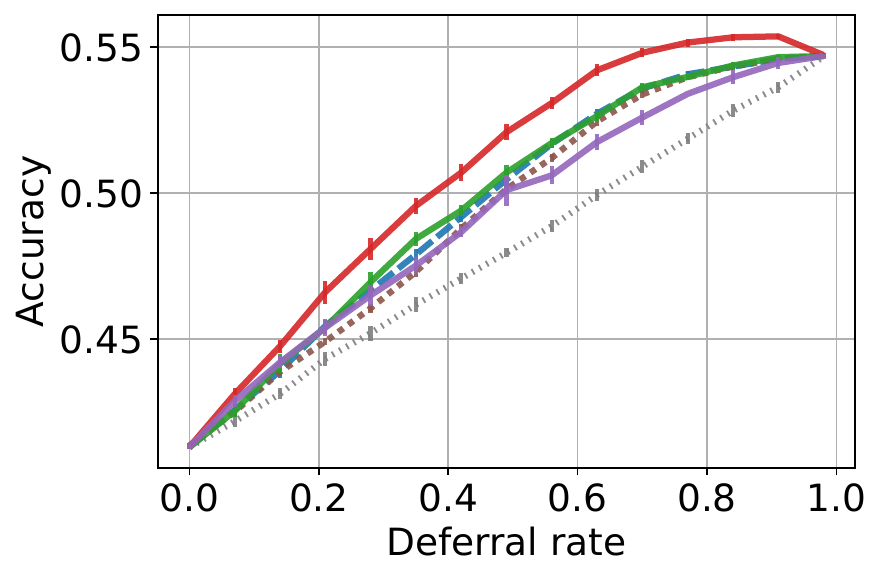}
        }  
        \subcaptionbox{\resizecap{Head classes: 25.}}{
          \includegraphics[scale=0.33]{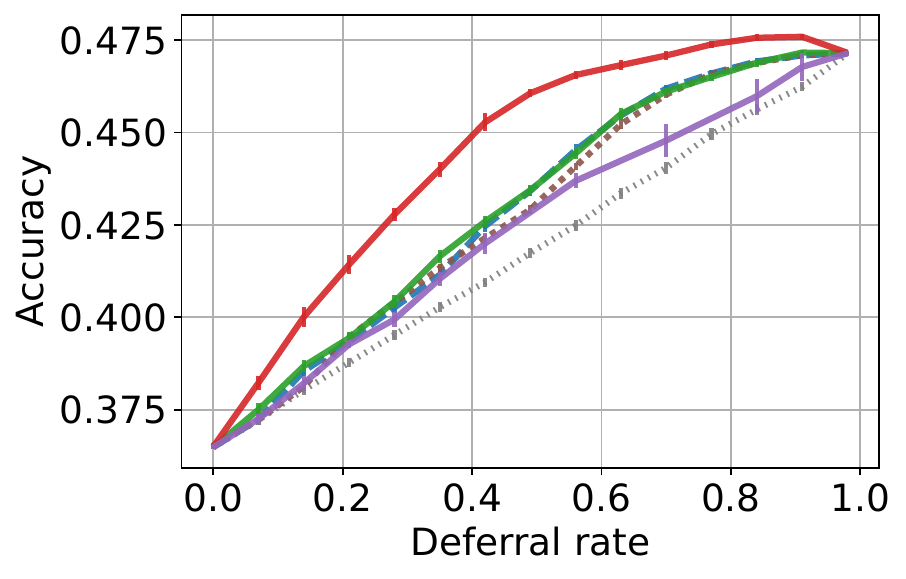}
        }          
    }
  
    \caption{Test accuracy vs deferral rate of the post-hoc approaches in \Cref{tbl:post_hoc_estimates} under the three settings described in \Cref{sec:chow-versus-oracle}: 1) specialist (row 1), 2) label noise (row 2), and 3) distribution shift.
    \textbf{Row 1}: As the fraction of non-dog training images decreases, model 2
    becomes a dog specialist model. Increase in the non-uniformity in its error probabilities allows post-hoc approaches to learn to only defer dog images.
        \textbf{Row 2}: As label noise increases, the difference in the probability of correct prediction under each model becomes zero (i.e., probability tends to chance level). Thus, it is sub-optimal to defer affected inputs since model 2's correctness is also at chance level.
    Being oblivious to model 2, confidence-based deferral underperforms. 
    For full details, see \Cref{sec:chow_suffices}.
    \textbf{Row 3}: As the skewness of the label distribution increases, so does the difference in the probability of correct prediction under each model (recall the optimal rule in \Cref{prop:oracle_k2}), and it becomes necessary to account for
    model 2's probability of correct prediction when deferring. Hence, confidence-based deferral underperforms.
    }
    \label{fig:three_settings_grid}
    \vspace{-2mm}
\end{figure*}

\textbf{Specialist setting}.
We start with the same ImageNet-Dog specialist setting used in \Cref{sec:oracle_experiment}. 
This time, we compare six methods: 
Confidence, Random, {\tt MaxProb}, {\tt Diff-01}, {\tt Diff-Prob}, and Entropy.
Random is a baseline approach that defers to either model 1 or model 2 at random;
{\tt MaxProb}, {\tt Diff-01}, and {\tt Diff-Prob} are the post-hoc rules described in \Cref{tbl:post_hoc_estimates};
Entropy defers based on the thresholding the entropy of $p^{(1)}$ \citep{Huang:2018,Kag:2023},
as opposed to the maximum probability.

Results for this setting are presented in \Cref{fig:three_settings_grid} (first row).
We see that there are gains from post-hoc deferral, especially in the low deferral regime:
it can accurately determine whether the second model is likely to make a mistake. 
Aligning with our analysis, 
for the generalist setting (\Cref{fig:three_settings_grid}a), 
confidence-based deferral is highly competitive, 
since $h^{(2)}$ gives a consistent estimate of $\Pr(y\mid x)$.

\textbf{Label noise setting}.
In this setting,
we look at a problem with label noise. We consider CIFAR 100 dataset where
training examples from pre-chosen $L_{\rm noise} \in \{0, 10, 25\}$ classes are assigned a uniformly drawn label. The case of $L_{\rm noise} = 0$ corresponds to the standard CIFAR 100 problem. 
We set $h^{(1)}$ to be CIFAR ResNet 8 and set $h^{(2)}$ to be CIFAR ResNet 14,
and train both models on the noisy data.
The results are shown
in \Cref{fig:three_settings_grid} (second row).
It is evident that when there is label noise, post-hoc approaches yield higher accuracy than confidence-based on a large range of deferral rates, aligning with our analysis in \Cref{sec:chow-versus-oracle}. 
Intuitively, confidence-based deferral tends to forward noisy samples to $h^{(2)}$, which performs equally poorly, thus leading to a waste of deferral budget.
By contrast, post-hoc rules can learn to ``give up'' on samples with extremely low model 1 confidence.

\textbf{Distribution shift setting}.
To simulate distribution shift, 
we consider a long-tailed version of CIFAR 100 \citep{krizhevsky2009learning} where
there are $h \in \{100, 50, 25\}$ head classes, and $100-h$ tail classes. Each head class has 500 training images, and each tail class has 50 training images. The standard CIFAR 100 dataset corresponds to $h=100$.
Both models $h^{(1)}$ (CIFAR ResNet 8) and $h^{(2)}$ (CIFAR ResNet 56) are trained
on these long-tailed datasets. 
At test
time, all methods are evaluated on the standard CIFAR 100 balanced test set, 
resulting in a label distribution shift.

We present our results in \Cref{fig:three_settings_grid} (third row).
In \Cref{fig:three_settings_grid}g, there is no distribution shift.
As in the case of the specialist setting, there is little to no gain from post-hoc approaches in this case since both models are sufficiently accurate.
As $h$ decreases from 100 to 50 (\Cref{fig:three_settings_grid}h) and 25 (\Cref{fig:three_settings_grid}i), there is more distribution shift at test time, and post-hoc approaches (notably {\tt Diff-01}) show clearer gains. 
To elaborate, the two base models are of different sizes and respond to the 
distribution shift differently, with CIFAR ResNet 56 being able to better handle
tail classes overall. {\tt Diff-01} is able to identify the superior performance of $h^{(2)}$ and defer input instances from tail classes.

\begin{figure*}[!t]
    \centering
    \resizebox{0.85\linewidth}{!}{%
        \subcaptionbox{Training performance}{
          \includegraphics[width=0.4\textwidth]{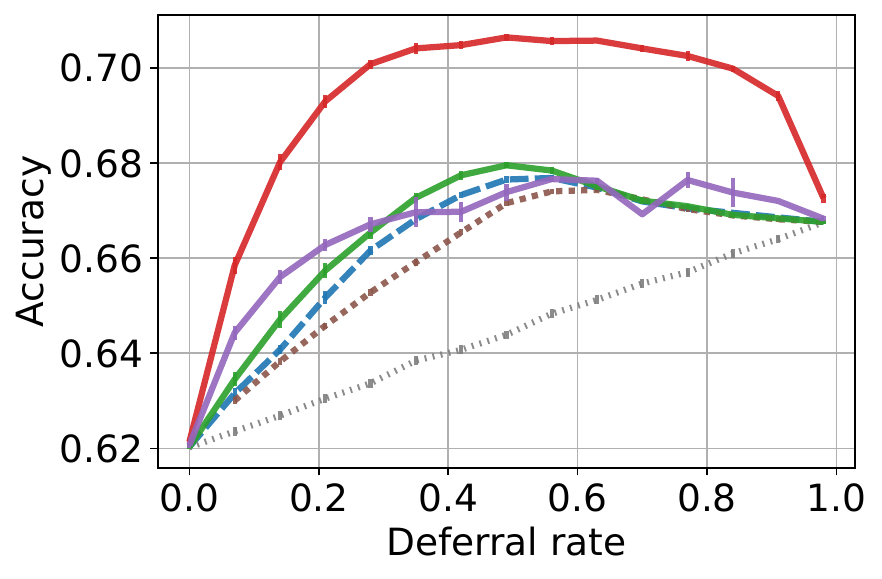}
        } 
        \subcaptionbox{Test performance}{
          \includegraphics[width=0.4\textwidth]{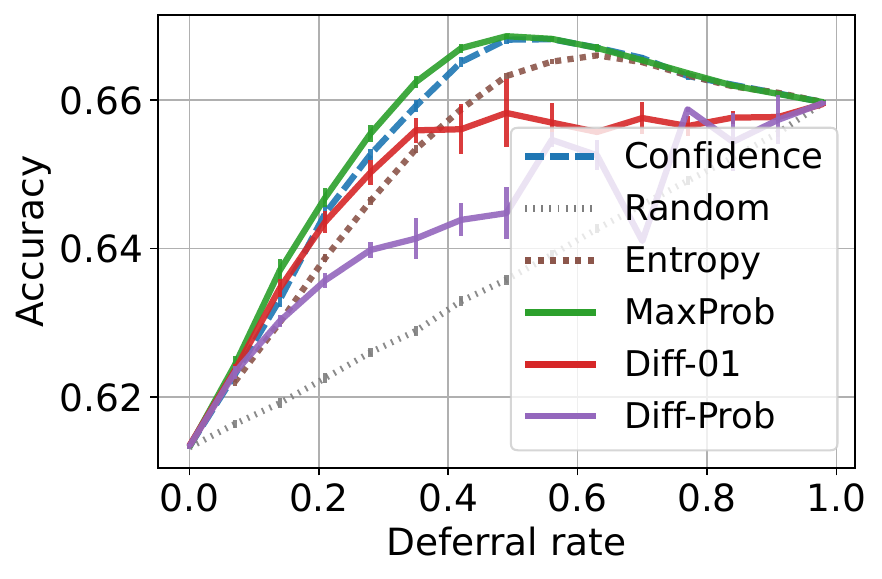}
        }  
    }
    \caption{Training and test accuracy of post-hoc approaches in the ImageNet-Dog specialist setting. Model 2 (EfficientNet B0) is trained with all dog images and 8\% of non-dog images. Observe that a post-hoc model  (i.e., {\tt Diff-01}) can severely overfit to the training set and fail to generalise.
    }
    \label{fig:posthoc_train_test_curves}
\end{figure*}

\subsection{On the Generalisation of Post-Hoc Estimators}
Despite the benefits of post-hoc approaches as demonstrated earlier, care must be taken in controlling
the capacity of the post-hoc models. 
We consider the same ImageNet-Dog specialist setting as in the top row of \Cref{fig:three_settings_grid}.
Here, model 2 is trained on all dog images, and a large fraction of non-dog images (8\%). 
Since model 2 has access to a non-trivial fraction of non-dog images,
the difference in the probability of correct prediction of the two models is less predictable.
We report deferral curves on both training and test splits in \Cref{fig:posthoc_train_test_curves}. 
Indeed, we observe that the post-hoc method {\tt Diff-01} can overfit,
and fail to 
generalise. Note that this is despite using a feedforward network with two hidden layers of only 64 and 16 units (see \Cref{app:hyperparams} for details on hyperparameters) to control the capacity of the post-hoc model.
Thoroughly investigating approaches to increase generalisation of post-hoc models will be an interesting topic for future study.

\section{Conclusion and Future Work}

The Bayes-optimal deferral rule we present suggests that key to optimally defer is to identify
when the first model is wrong and the second is right. Based on this result, we then study 
a number of estimators (\Cref{tbl:post_hoc_estimates}) to construct trainable post hoc deferral rules, and show that they can improve upon the commonly used confidence-based deferral.

While we have identified conditions under which confidence-based deferral underperforms (e.g., specialist setting, label noise), these are not exhaustive.
An interesting direction for future work is to design post-hoc deferral schemes attuned for settings involving other forms of distribution shift,
such as the presence of out-of-distribution samples.
It is also of interest to study the efficacy of more refined confidence measures, such as those based on conformal prediction~\citep{Shafer:2008}.
Finally, while our results have focussed on image classification settings, it would be of interest to study analogous trends for natural language processing models.

\bibliographystyle{plainnat}
\bibliography{ref}

\begin{thebibliography}{86}
\providecommand{\natexlab}[1]{#1}
\providecommand{\url}[1]{\texttt{#1}}
\expandafter\ifx\csname urlstyle\endcsname\relax
  \providecommand{\doi}[1]{doi: #1}\else
  \providecommand{\doi}{doi: \begingroup \urlstyle{rm}\Url}\fi

\bibitem[Bartlett and Wegkamp(2008)]{Bartlett:2008}
Peter~L. Bartlett and Marten~H. Wegkamp.
\newblock Classification with a reject option using a hinge loss.
\newblock \emph{Journal of Machine Learning Research}, 9\penalty0
  (59):\penalty0 1823--1840, 2008.
\newblock URL \url{http://jmlr.org/papers/v9/bartlett08a.html}.

\bibitem[Bolukbasi et~al.(2017)Bolukbasi, Wang, Dekel, and
  Saligrama]{Bolukbasi:2017}
Tolga Bolukbasi, Joseph Wang, Ofer Dekel, and Venkatesh Saligrama.
\newblock Adaptive neural networks for fast test-time prediction.
\newblock In \emph{International Conference on Machine Learning}, 2017.

\bibitem[Brown et~al.(2020)Brown, Mann, Ryder, Subbiah, Kaplan, Dhariwal,
  Neelakantan, Shyam, Sastry, Askell, Agarwal, Herbert-Voss, Krueger, Henighan,
  Child, Ramesh, Ziegler, Wu, Winter, Hesse, Chen, Sigler, Litwin, Gray, Chess,
  Clark, Berner, McCandlish, Radford, Sutskever, and Amodei]{Brown:2020}
Tom Brown, Benjamin Mann, Nick Ryder, Melanie Subbiah, Jared~D Kaplan, Prafulla
  Dhariwal, Arvind Neelakantan, Pranav Shyam, Girish Sastry, Amanda Askell,
  Sandhini Agarwal, Ariel Herbert-Voss, Gretchen Krueger, Tom Henighan, Rewon
  Child, Aditya Ramesh, Daniel Ziegler, Jeffrey Wu, Clemens Winter, Chris
  Hesse, Mark Chen, Eric Sigler, Mateusz Litwin, Scott Gray, Benjamin Chess,
  Jack Clark, Christopher Berner, Sam McCandlish, Alec Radford, Ilya Sutskever,
  and Dario Amodei.
\newblock Language models are few-shot learners.
\newblock In H.~Larochelle, M.~Ranzato, R.~Hadsell, M.~F. Balcan, and H.~Lin,
  editors, \emph{Advances in Neural Information Processing Systems}, volume~33,
  pages 1877--1901. Curran Associates, Inc., 2020.

\bibitem[Brubaker et~al.(2007)Brubaker, Wu, Sun, Mullin, and
  Rehg]{Brubaker:2007}
S.~Charles Brubaker, Jianxin Wu, Jie Sun, Matthew~D. Mullin, and James~M. Rehg.
\newblock On the design of cascades of boosted ensembles for face detection.
\newblock \emph{International Journal of Computer Vision}, 77:\penalty0 65--86,
  2007.

\bibitem[Bucil\v{a} et~al.(2006)Bucil\v{a}, Caruana, and
  Niculescu-Mizil]{Bucilua:2006}
Cristian Bucil\v{a}, Rich Caruana, and Alexandru Niculescu-Mizil.
\newblock Model compression.
\newblock In \emph{Proceedings of the 12th ACM SIGKDD International Conference
  on Knowledge Discovery and Data Mining}, KDD '06, pages 535--541, New York,
  NY, USA, 2006. ACM.

\bibitem[Canziani et~al.(2016)Canziani, Paszke, and Culurciello]{Canziani:2016}
Alfredo Canziani, Adam Paszke, and Eugenio Culurciello.
\newblock An analysis of deep neural network models for practical applications.
\newblock \emph{CoRR}, abs/1605.07678, 2016.
\newblock URL \url{http://arxiv.org/abs/1605.07678}.

\bibitem[Cao et~al.(2022)Cao, Cai, Feng, Gu, GU, An, Niu, and
  Sugiyama]{cao2022}
Yuzhou Cao, Tianchi Cai, Lei Feng, Lihong Gu, Jinjie GU, Bo~An, Gang Niu, and
  Masashi Sugiyama.
\newblock Generalizing consistent multi-class classification with rejection to
  be compatible with arbitrary losses.
\newblock In S.~Koyejo, S.~Mohamed, A.~Agarwal, D.~Belgrave, K.~Cho, and A.~Oh,
  editors, \emph{Advances in Neural Information Processing Systems}, volume~35,
  pages 521--534. Curran Associates, Inc., 2022.

\bibitem[Charoenphakdee et~al.(2021)Charoenphakdee, Cui, Zhang, and
  Sugiyama]{Charoenphakdee:2021}
Nontawat Charoenphakdee, Zhenghang Cui, Yivan Zhang, and Masashi Sugiyama.
\newblock Classification with rejection based on cost-sensitive classification.
\newblock In Marina Meila and Tong Zhang, editors, \emph{Proceedings of the
  38th International Conference on Machine Learning}, volume 139 of
  \emph{Proceedings of Machine Learning Research}, pages 1507--1517. PMLR,
  18--24 Jul 2021.

\bibitem[Charusaie et~al.(2022)Charusaie, Mozannar, Sontag, and
  Samadi]{charusaie22a}
Mohammad-Amin Charusaie, Hussein Mozannar, David Sontag, and Samira Samadi.
\newblock Sample efficient learning of predictors that complement humans.
\newblock In Kamalika Chaudhuri, Stefanie Jegelka, Le~Song, Csaba Szepesvari,
  Gang Niu, and Sivan Sabato, editors, \emph{Proceedings of the 39th
  International Conference on Machine Learning}, volume 162 of
  \emph{Proceedings of Machine Learning Research}, pages 2972--3005. PMLR,
  17--23 Jul 2022.

\bibitem[Chen et~al.(2012)Chen, Xu, Weinberger, Chapelle, and Kedem]{Chen:2012}
Minmin Chen, Zhixiang Xu, Kilian Weinberger, Olivier Chapelle, and Dor Kedem.
\newblock Classifier cascade for minimizing feature evaluation cost.
\newblock In Neil~D. Lawrence and Mark Girolami, editors, \emph{Proceedings of
  the Fifteenth International Conference on Artificial Intelligence and
  Statistics}, volume~22 of \emph{Proceedings of Machine Learning Research},
  pages 218--226, La Palma, Canary Islands, 21--23 Apr 2012. PMLR.

\bibitem[Chow(1970)]{Cho1970}
C.~Chow.
\newblock On optimum recognition error and reject tradeoff.
\newblock \emph{IEEE Transactions on Information Theory}, 16\penalty0
  (1):\penalty0 41--46, 1970.
\newblock \doi{10.1109/TIT.1970.1054406}.

\bibitem[Cortes et~al.(2016)Cortes, DeSalvo, and Mohri]{CorDeSMoh2016}
Corinna Cortes, Giulia DeSalvo, and Mehryar Mohri.
\newblock Boosting with abstention.
\newblock \emph{Advances in Neural Information Processing Systems},
  29:\penalty0 1660--1668, 2016.

\bibitem[DeSalvo et~al.(2015)DeSalvo, Mohri, and Syed]{DeSalvo:2015}
Giulia DeSalvo, Mehryar Mohri, and Umar Syed.
\newblock Learning with deep cascades.
\newblock In \emph{Proceedings of the Twenty-Sixth International Conference on
  Algorithmic Learning Theory (ALT 2015)}, 2015.

\bibitem[Dohan et~al.(2022)Dohan, Xu, Lewkowycz, Austin, Bieber, Lopes, Wu,
  Michalewski, Saurous, Sohl-dickstein, Murphy, and Sutton]{Dohan:2022}
David Dohan, Winnie Xu, Aitor Lewkowycz, Jacob Austin, David Bieber,
  Raphael~Gontijo Lopes, Yuhuai Wu, Henryk Michalewski, Rif~A. Saurous, Jascha
  Sohl-dickstein, Kevin Murphy, and Charles Sutton.
\newblock Language model cascades, 2022.
\newblock URL \url{https://arxiv.org/abs/2207.10342}.

\bibitem[Dosovitskiy et~al.(2021)Dosovitskiy, Beyer, Kolesnikov, Weissenborn,
  Zhai, Unterthiner, Dehghani, Minderer, Heigold, Gelly, Uszkoreit, and
  Houlsby]{Dosovitskiy:2021}
Alexey Dosovitskiy, Lucas Beyer, Alexander Kolesnikov, Dirk Weissenborn,
  Xiaohua Zhai, Thomas Unterthiner, Mostafa Dehghani, Matthias Minderer, Georg
  Heigold, Sylvain Gelly, Jakob Uszkoreit, and Neil Houlsby.
\newblock An image is worth 16x16 words: Transformers for image recognition at
  scale.
\newblock In \emph{International Conference on Learning Representations}, 2021.
\newblock URL \url{https://openreview.net/forum?id=YicbFdNTTy}.

\bibitem[Drummond and Holte(2006)]{drummond2006cost}
Chris Drummond and Robert~C Holte.
\newblock Cost curves: An improved method for visualizing classifier
  performance.
\newblock \emph{Machine learning}, 65:\penalty0 95--130, 2006.

\bibitem[Elbayad et~al.(2020)Elbayad, Gu, Grave, and Auli]{Elbayad:2020}
Maha Elbayad, Jiatao Gu, Edouard Grave, and Michael Auli.
\newblock Depth-adaptive transformer.
\newblock In \emph{International Conference on Learning Representations}, 2020.
\newblock URL \url{https://openreview.net/forum?id=SJg7KhVKPH}.

\bibitem[Elkan(2001)]{Elkan:2001}
Charles Elkan.
\newblock The foundations of cost-sensitive learning.
\newblock In \emph{Proceedings of the 17th International Joint Conference on
  Artificial Intelligence - Volume 2}, IJCAI'01, page 973–978, San Francisco,
  CA, USA, 2001. Morgan Kaufmann Publishers Inc.
\newblock ISBN 1558608125.

\bibitem[Fan et~al.(2020)Fan, Grave, and Joulin]{Fan:2020}
Angela Fan, Edouard Grave, and Armand Joulin.
\newblock Reducing transformer depth on demand with structured dropout.
\newblock In \emph{International Conference on Learning Representations}, 2020.
\newblock URL \url{https://openreview.net/forum?id=SylO2yStDr}.

\bibitem[Fedus et~al.(2021)Fedus, Zoph, and Shazeer]{Fedus:2021}
William Fedus, Barret Zoph, and Noam Shazeer.
\newblock Switch transformers: Scaling to trillion parameter models with simple
  and efficient sparsity.
\newblock \emph{CoRR}, abs/2101.03961, 2021.
\newblock URL \url{https://arxiv.org/abs/2101.03961}.

\bibitem[Gangrade et~al.(2021)Gangrade, Kag, and Saligrama]{Gangrade:2021}
Aditya Gangrade, Anil Kag, and Venkatesh Saligrama.
\newblock Selective classification via one-sided prediction.
\newblock In Arindam Banerjee and Kenji Fukumizu, editors, \emph{Proceedings of
  The 24th International Conference on Artificial Intelligence and Statistics},
  volume 130 of \emph{Proceedings of Machine Learning Research}, pages
  2179--2187. PMLR, 13--15 Apr 2021.
\newblock URL \url{https://proceedings.mlr.press/v130/gangrade21a.html}.

\bibitem[Garc{\'i}a-Pedrajas et~al.(2005)Garc{\'i}a-Pedrajas, Ortiz-Boyer, del
  Castillo-Gomariz, and Herv{\'a}s-Mart{\'i}nez]{Pedrajas:2005}
N.~Garc{\'i}a-Pedrajas, D.~Ortiz-Boyer, R.~del Castillo-Gomariz, and
  C.~Herv{\'a}s-Mart{\'i}nez.
\newblock Cascade ensembles.
\newblock In Joan Cabestany, Alberto Prieto, and Francisco Sandoval, editors,
  \emph{Computational Intelligence and Bioinspired Systems}, pages 598--603,
  Berlin, Heidelberg, 2005. Springer Berlin Heidelberg.
\newblock ISBN 978-3-540-32106-4.

\bibitem[Geifman and El-Yaniv(2019)]{Geifman:2019}
Yonatan Geifman and Ran El-Yaniv.
\newblock {S}elective{N}et: A deep neural network with an integrated reject
  option.
\newblock In Kamalika Chaudhuri and Ruslan Salakhutdinov, editors,
  \emph{Proceedings of the 36th International Conference on Machine Learning},
  volume~97 of \emph{Proceedings of Machine Learning Research}, pages
  2151--2159. PMLR, 09--15 Jun 2019.

\bibitem[Graf et~al.(2004)Graf, Cosatto, Bottou, Dourdanovic, and
  Vapnik]{graf2004parallel}
Hans Graf, Eric Cosatto, Leon Bottou, Igor Dourdanovic, and Vladimir Vapnik.
\newblock Parallel support vector machines: The cascade svm.
\newblock \emph{Advances in neural information processing systems}, 17, 2004.

\bibitem[Guo et~al.(2017)Guo, Pleiss, Sun, and Weinberger]{Guo:2017}
Chuan Guo, Geoff Pleiss, Yu~Sun, and Kilian~Q. Weinberger.
\newblock On calibration of modern neural networks.
\newblock In \emph{Proceedings of the 34th International Conference on Machine
  Learning - Volume 70}, ICML'17, page 1321–1330. JMLR.org, 2017.

\bibitem[Han et~al.(2022)Han, Huang, Song, Yang, Wang, and Wang]{Han:2022}
Y.~Han, G.~Huang, S.~Song, L.~Yang, H.~Wang, and Y.~Wang.
\newblock Dynamic neural networks: A survey.
\newblock \emph{IEEE Transactions on Pattern Analysis \& Machine Intelligence},
  44\penalty0 (11):\penalty0 7436--7456, nov 2022.
\newblock ISSN 1939-3539.
\newblock \doi{10.1109/TPAMI.2021.3117837}.

\bibitem[Hendrickx et~al.(2021)Hendrickx, Perini, der Plas, Meert, and
  Davis]{Hendrickx:2021}
Kilian Hendrickx, Lorenzo Perini, Dries~Van der Plas, Wannes Meert, and Jesse
  Davis.
\newblock Machine learning with a reject option: {A} survey.
\newblock \emph{CoRR}, abs/2107.11277, 2021.

\bibitem[Hendrycks and Gimpel(2017)]{Hendrycks:2017}
Dan Hendrycks and Kevin Gimpel.
\newblock A baseline for detecting misclassified and out-of-distribution
  examples in neural networks.
\newblock In \emph{International Conference on Learning Representations}, 2017.
\newblock URL \url{https://openreview.net/forum?id=Hkg4TI9xl}.

\bibitem[Hinton et~al.(2015)Hinton, Vinyals, and Dean]{Hinton:2015}
Geoffrey~E. Hinton, Oriol Vinyals, and Jeffrey Dean.
\newblock Distilling the knowledge in a neural network.
\newblock \emph{CoRR}, abs/1503.02531, 2015.

\bibitem[Huang et~al.(2016)Huang, Sun, Liu, Sedra, and Weinberger]{Huang:2016}
Gao Huang, Yu~Sun, Zhuang Liu, Daniel Sedra, and Kilian~Q. Weinberger.
\newblock Deep networks with stochastic depth.
\newblock In Bastian Leibe, Jiri Matas, Nicu Sebe, and Max Welling, editors,
  \emph{Computer Vision -- ECCV 2016}, pages 646--661, Cham, 2016. Springer
  International Publishing.
\newblock ISBN 978-3-319-46493-0.

\bibitem[Huang et~al.(2018)Huang, Chen, Li, Wu, van~der Maaten, and
  Weinberger]{Huang:2018}
Gao Huang, Danlu Chen, Tianhong Li, Felix Wu, Laurens van~der Maaten, and
  Kilian Weinberger.
\newblock Multi-scale dense networks for resource efficient image
  classification.
\newblock In \emph{International Conference on Learning Representations}, 2018.

\bibitem[Hubara et~al.(2017)Hubara, Courbariaux, Soudry, El-Yaniv, and
  Bengio]{Hubara:2017}
Itay Hubara, Matthieu Courbariaux, Daniel Soudry, Ran El-Yaniv, and Yoshua
  Bengio.
\newblock Quantized neural networks: Training neural networks with low
  precision weights and activations.
\newblock \emph{J. Mach. Learn. Res.}, 18\penalty0 (1):\penalty0 6869–6898,
  jan 2017.
\newblock ISSN 1532-4435.

\bibitem[H\"{u}llermeier and Waegeman(2021)]{Hullermeier:2021}
Eyke H\"{u}llermeier and Willem Waegeman.
\newblock Aleatoric and epistemic uncertainty in machine learning: an
  introduction to concepts and methods.
\newblock \emph{Machine Learning}, 110\penalty0 (3):\penalty0 457--506, mar
  2021.
\newblock \doi{10.1007/s10994-021-05946-3}.
\newblock URL \url{https://doi.org/10.1007%2Fs10994-021-05946-3}.

\bibitem[Jiang et~al.(2018)Jiang, Kim, Guan, and Gupta]{Jiang:2018}
Heinrich Jiang, Been Kim, Melody~Y. Guan, and Maya Gupta.
\newblock To trust or not to trust a classifier.
\newblock In \emph{Proceedings of the 32nd International Conference on Neural
  Information Processing Systems}, NeurIPS'18, page 5546–5557, Red Hook, NY,
  USA, 2018. Curran Associates Inc.

\bibitem[Jiang et~al.(2020)Jiang, Huang, Liu, and Yang]{jiang2020}
Lu~Jiang, Di~Huang, Mason Liu, and Weilong Yang.
\newblock Beyond synthetic noise: Deep learning on controlled noisy labels.
\newblock In \emph{Proceedings of the 37th International Conference on Machine
  Learning}, ICML'20. JMLR.org, 2020.

\bibitem[Kag et~al.(2023)Kag, Fedorov, Gangrade, Whatmough, and
  Saligrama]{Kag:2023}
Anil Kag, Igor Fedorov, Aditya Gangrade, Paul Whatmough, and Venkatesh
  Saligrama.
\newblock Efficient edge inference by selective query.
\newblock In \emph{The Eleventh International Conference on Learning
  Representations}, 2023.
\newblock URL \url{https://openreview.net/forum?id=jpR98ZdIm2q}.

\bibitem[Kendall and Gal(2017)]{Kendall:2017}
Alex Kendall and Yarin Gal.
\newblock What uncertainties do we need in bayesian deep learning for computer
  vision?
\newblock In Isabelle Guyon, Ulrike von Luxburg, Samy Bengio, Hanna~M. Wallach,
  Rob Fergus, S.~V.~N. Vishwanathan, and Roman Garnett, editors, \emph{Advances
  in Neural Information Processing Systems 30: Annual Conference on Neural
  Information Processing Systems 2017, December 4-9, 2017, Long Beach, CA,
  {USA}}, pages 5574--5584, 2017.
\newblock URL
  \url{https://proceedings.neurips.cc/paper/2017/hash/2650d6089a6d640c5e85b2b88265dc2b-Abstract.html}.

\bibitem[Khalili et~al.(2022)Khalili, You, and Bohannon]{Khalili:2022}
Leila Khalili, Yao You, and John Bohannon.
\newblock Babybear: Cheap inference triage for expensive language models, 2022.
\newblock URL \url{https://arxiv.org/abs/2205.11747}.

\bibitem[Kingma and Ba(2014)]{kingma2014adam}
Diederik~P Kingma and Jimmy Ba.
\newblock Adam: A method for stochastic optimization.
\newblock \emph{arXiv preprint arXiv:1412.6980}, 2014.

\bibitem[Krizhevsky et~al.(2009)Krizhevsky, Hinton,
  et~al.]{krizhevsky2009learning}
Alex Krizhevsky, Geoffrey Hinton, et~al.
\newblock Learning multiple layers of features from tiny images.
\newblock 2009.

\bibitem[Larsson et~al.(2017)Larsson, Maire, and Shakhnarovich]{Larsson:2017}
Gustav Larsson, Michael Maire, and Gregory Shakhnarovich.
\newblock Fractalnet: Ultra-deep neural networks without residuals.
\newblock In \emph{ICLR}, 2017.

\bibitem[LeCun et~al.(1989)LeCun, Denker, and Solla]{LeCun:1989}
Yann LeCun, John Denker, and Sara Solla.
\newblock Optimal brain damage.
\newblock In D.~Touretzky, editor, \emph{Advances in Neural Information
  Processing Systems}, volume~2. Morgan-Kaufmann, 1989.
\newblock URL
  \url{https://proceedings.neurips.cc/paper_files/paper/1989/file/6c9882bbac1c7093bd25041881277658-Paper.pdf}.

\bibitem[LeCun et~al.(1998)LeCun, Bottou, Bengio, and
  Haffner]{lecun1998gradient}
Yann LeCun, L{\'e}on Bottou, Yoshua Bengio, and Patrick Haffner.
\newblock Gradient-based learning applied to document recognition.
\newblock \emph{Proceedings of the IEEE}, 86\penalty0 (11):\penalty0
  2278--2324, 1998.

\bibitem[Liu et~al.(2015)Liu, Wang, Foroosh, Tappen, and Penksy]{Liu:2015}
Baoyuan Liu, Min Wang, Hassan Foroosh, Marshall Tappen, and Marianna Penksy.
\newblock Sparse convolutional neural networks.
\newblock In \emph{2015 IEEE Conference on Computer Vision and Pattern
  Recognition (CVPR)}, pages 806--814, 2015.
\newblock \doi{10.1109/CVPR.2015.7298681}.

\bibitem[Liu et~al.(2020)Liu, Zhou, Zhao, Wang, Deng, and Ju]{Liu:2020}
Weijie Liu, Peng Zhou, Zhe Zhao, Zhiruo Wang, Haotang Deng, and Qi~Ju.
\newblock {FastBERT}: a self-distilling bert with adaptive inference time.
\newblock In \emph{Proceedings of ACL 2020}, 2020.

\bibitem[Madras et~al.(2018)Madras, Pitassi, and Zemel]{Madras:2018}
David Madras, Toniann Pitassi, and Richard Zemel.
\newblock Predict responsibly: Improving fairness and accuracy by learning to
  defer.
\newblock In \emph{Proceedings of the 32nd International Conference on Neural
  Information Processing Systems}, NeurIPS'18, page 6150–6160, Red Hook, NY,
  USA, 2018. Curran Associates Inc.

\bibitem[Mamou et~al.(2022)Mamou, Pereg, Wasserblat, and Schwartz]{Mamou:2022}
Jonathan Mamou, Oren Pereg, Moshe Wasserblat, and Roy Schwartz.
\newblock {TangoBERT}: Reducing inference cost by using cascaded architecture,
  2022.
\newblock URL \url{http://arxiv.org/abs/2204.06271}.

\bibitem[Mozannar and Sontag(2020)]{Mozannar:2020}
Hussein Mozannar and David Sontag.
\newblock Consistent estimators for learning to defer to an expert.
\newblock In Hal~Daumé III and Aarti Singh, editors, \emph{Proceedings of the
  37th International Conference on Machine Learning}, volume 119 of
  \emph{Proceedings of Machine Learning Research}, pages 7076--7087. PMLR,
  13--18 Jul 2020.

\bibitem[Narasimhan et~al.(2022)Narasimhan, Jitkrittum, Menon, Rawat, and
  Kumar]{narasimhan2022posthoc}
Harikrishna Narasimhan, Wittawat Jitkrittum, Aditya~Krishna Menon, Ankit~Singh
  Rawat, and Sanjiv Kumar.
\newblock Post-hoc estimators for learning to defer to an expert.
\newblock In Alice~H. Oh, Alekh Agarwal, Danielle Belgrave, and Kyunghyun Cho,
  editors, \emph{Advances in Neural Information Processing Systems}, 2022.
\newblock URL \url{https://openreview.net/forum?id=_jg6Sf6tuF7}.

\bibitem[Narayan et~al.(2022)Narayan, Jiang, Zhao, and Kumar]{narayan2022}
Taman Narayan, Heinrich Jiang, Sen Zhao, and Sanjiv Kumar.
\newblock Predicting on the edge: Identifying where a larger model does better,
  2022.
\newblock URL \url{https://arxiv.org/abs/2202.07652}.

\bibitem[Neyman and Pearson(1933)]{neyman1933ix}
Jerzy Neyman and Egon~Sharpe Pearson.
\newblock Ix. on the problem of the most efficient tests of statistical
  hypotheses.
\newblock \emph{Philosophical Transactions of the Royal Society of London.
  Series A, Containing Papers of a Mathematical or Physical Character},
  231\penalty0 (694-706):\penalty0 289--337, 1933.

\bibitem[Nguyen et~al.(2015)Nguyen, Yosinski, and Clune]{Nguyen:2015}
Anh Nguyen, Jason Yosinski, and Jeff Clune.
\newblock Deep neural networks are easily fooled: High confidence predictions
  for unrecognizable images.
\newblock In \emph{2015 IEEE Conference on Computer Vision and Pattern
  Recognition (CVPR)}, pages 427--436, 2015.
\newblock \doi{10.1109/CVPR.2015.7298640}.

\bibitem[Ni et~al.(2019)Ni, Charoenphakdee, Honda, and Sugiyama]{Ni:2019}
Chenri Ni, Nontawat Charoenphakdee, Junya Honda, and Masashi Sugiyama.
\newblock On the calibration of multiclass classification with rejection.
\newblock In Hanna~M. Wallach, Hugo Larochelle, Alina Beygelzimer, Florence
  d'Alch{\'{e}}{-}Buc, Emily~B. Fox, and Roman Garnett, editors, \emph{Advances
  in Neural Information Processing Systems 32: Annual Conference on Neural
  Information Processing Systems 2019, NeurIPS 2019, December 8-14, 2019,
  Vancouver, BC, Canada}, pages 2582--2592, 2019.

\bibitem[Niculescu-Mizil and Caruana(2005)]{Niculescu-Mizil:2005}
Alexandru Niculescu-Mizil and Rich Caruana.
\newblock Predicting good probabilities with supervised learning.
\newblock In \emph{Proceedings of the 22nd International Conference on Machine
  Learning}, ICML '05, page 625–632, New York, NY, USA, 2005. Association for
  Computing Machinery.
\newblock ISBN 1595931805.
\newblock \doi{10.1145/1102351.1102430}.
\newblock URL \url{https://doi.org/10.1145/1102351.1102430}.

\bibitem[Panda et~al.(2016)Panda, Sengupta, and Roy]{Panda:2016}
Priyadarshini Panda, Abhronil Sengupta, and Kaushik Roy.
\newblock Conditional deep learning for energy-efficient and enhanced pattern
  recognition.
\newblock In \emph{Proceedings of the 2016 Conference on Design, Automation \&
  Test in Europe}, DATE '16, page 475–480, San Jose, CA, USA, 2016. EDA
  Consortium.
\newblock ISBN 9783981537062.

\bibitem[Quinonero-Candela et~al.(2009)Quinonero-Candela, Sugiyama,
  Schwaighofer, and Lawrence]{Quinonero-Candela:2009}
Joaquin Quinonero-Candela, Masashi Sugiyama, Anton Schwaighofer, and Neil~D.
  Lawrence.
\newblock \emph{{Dataset shift in machine learning}}.
\newblock MIT Press, 2009.
\newblock ISBN 9780262170055.

\bibitem[Radford and Narasimhan(2018)]{Radford:2018}
Alec Radford and Karthik Narasimhan.
\newblock Improving language understanding by generative pre-training, 2018.

\bibitem[Radford et~al.(2019)Radford, Wu, Child, Luan, Amodei, and
  Sutskever]{Radford:2019}
Alec Radford, Jeff Wu, Rewon Child, David Luan, Dario Amodei, and Ilya
  Sutskever.
\newblock Language models are unsupervised multitask learners, 2019.

\bibitem[Raffel et~al.(2020)Raffel, Shazeer, Roberts, Lee, Narang, Matena,
  Zhou, Li, and Liu]{Raffel:2020}
Colin Raffel, Noam Shazeer, Adam Roberts, Katherine Lee, Sharan Narang, Michael
  Matena, Yanqi Zhou, Wei Li, and Peter~J. Liu.
\newblock Exploring the limits of transfer learning with a unified text-to-text
  transformer.
\newblock \emph{J. Mach. Learn. Res.}, 21:\penalty0 140:1--140:67, 2020.
\newblock URL \url{http://jmlr.org/papers/v21/20-074.html}.

\bibitem[Ramaswamy et~al.(2018)Ramaswamy, Tewari, and Agarwal]{Ramaswamy:2018}
Harish~G. Ramaswamy, Ambuj Tewari, and Shivani Agarwal.
\newblock {Consistent algorithms for multiclass classification with an abstain
  option}.
\newblock \emph{Electronic Journal of Statistics}, 12\penalty0 (1):\penalty0
  530 -- 554, 2018.
\newblock \doi{10.1214/17-EJS1388}.

\bibitem[Rawat et~al.(2021)Rawat, Zaheer, Menon, Ahmed, and Kumar]{Rawat:2021}
Ankit~Singh Rawat, Manzil Zaheer, Aditya~Krishna Menon, Amr Ahmed, and Sanjiv
  Kumar.
\newblock When in doubt, summon the titans: Efficient inference with large
  models.
\newblock \emph{CoRR}, abs/2110.10305, 2021.

\bibitem[Saberian and Vasconcelos(2014)]{saberian2014boosting}
Mohammad Saberian and Nuno Vasconcelos.
\newblock Boosting algorithms for detector cascade learning.
\newblock \emph{The Journal of Machine Learning Research}, 15\penalty0
  (1):\penalty0 2569--2605, 2014.

\bibitem[Saberian and Vasconcelos(2010)]{Saberian:2010}
Mohammad~J. Saberian and Nuno Vasconcelos.
\newblock Boosting classifier cascades.
\newblock In \emph{NeurIPS}, pages 2047--2055, 2010.
\newblock URL
  \url{http://papers.nips.cc/paper/4033-boosting-classifier-cascades}.

\bibitem[Samuel et~al.(2021)Samuel, Atzmon, and Chechik]{Samuel:2020}
Dvir Samuel, Yuval Atzmon, and Gal Chechik.
\newblock From generalized zero-shot learning to long-tail with class
  descriptors.
\newblock In \emph{Proceedings of the IEEE/CVF Winter Conference on
  Applications of Computer Vision (WACV)}, 2021.

\bibitem[Sandler et~al.(2018)Sandler, Howard, Zhu, Zhmoginov, and
  Chen]{sandler2018mobilenetv2}
Mark Sandler, Andrew Howard, Menglong Zhu, Andrey Zhmoginov, and Liang-Chieh
  Chen.
\newblock Mobilenetv2: Inverted residuals and linear bottlenecks.
\newblock In \emph{Proceedings of the IEEE conference on computer vision and
  pattern recognition}, pages 4510--4520, 2018.

\bibitem[Scardapane et~al.(2020)Scardapane, Scarpiniti, Baccarelli, and
  Uncini]{Scardapane:2020}
Simone Scardapane, Michele Scarpiniti, Enzo Baccarelli, and Aurelio Uncini.
\newblock Why should we add early exits to neural networks?
\newblock \emph{Cognitive Computation}, 12:\penalty0 954–966, 2020.

\bibitem[Schwartz et~al.(2019)Schwartz, Dodge, Smith, and
  Etzioni]{Schwartz:2019}
Roy Schwartz, Jesse Dodge, Noah~A. Smith, and Oren Etzioni.
\newblock Green {AI}.
\newblock \emph{CoRR}, abs/1907.10597, 2019.
\newblock URL \url{http://arxiv.org/abs/1907.10597}.

\bibitem[Schwartz et~al.(2020)Schwartz, Stanovsky, Swayamdipta, Dodge, and
  Smith]{Schwartz:2020}
Roy Schwartz, Gabriel Stanovsky, Swabha Swayamdipta, Jesse Dodge, and Noah~A.
  Smith.
\newblock The right tool for the job: Matching model and instance complexities.
\newblock In \emph{Proc. of ACL}, 2020.

\bibitem[Shafer and Vovk(2008)]{Shafer:2008}
Glenn Shafer and Vladimir Vovk.
\newblock A tutorial on conformal prediction.
\newblock \emph{Journal of Machine Learning Research}, 9\penalty0
  (12):\penalty0 371--421, 2008.
\newblock URL \url{http://jmlr.org/papers/v9/shafer08a.html}.

\bibitem[Streeter(2018)]{Streeter:2018}
Matthew Streeter.
\newblock Approximation algorithms for cascading prediction models.
\newblock In Jennifer Dy and Andreas Krause, editors, \emph{Proceedings of the
  35th International Conference on Machine Learning}, volume~80 of
  \emph{Proceedings of Machine Learning Research}, pages 4752--4760. PMLR,
  10--15 Jul 2018.

\bibitem[Tan and Le(2019)]{tan2019efficientnet}
Mingxing Tan and Quoc Le.
\newblock Efficientnet: Rethinking model scaling for convolutional neural
  networks.
\newblock In \emph{International conference on machine learning}, pages
  6105--6114. PMLR, 2019.

\bibitem[Teerapittayanon et~al.(2016)Teerapittayanon, McDanel, and
  Kung]{Teerapittayanon:2016}
Surat Teerapittayanon, Bradley McDanel, and H.~T. Kung.
\newblock Branchynet: Fast inference via early exiting from deep neural
  networks.
\newblock In \emph{23rd International Conference on Pattern Recognition, {ICPR}
  2016, Canc{\'{u}}n, Mexico, December 4-8, 2016}, pages 2464--2469. {IEEE},
  2016.

\bibitem[Thulasidasan et~al.(2019)Thulasidasan, Bhattacharya, Bilmes,
  Chennupati, and Mohd-Yusof]{Thulasidasan:2019}
Sunil Thulasidasan, Tanmoy Bhattacharya, Jeff Bilmes, Gopinath Chennupati, and
  Jamal Mohd-Yusof.
\newblock Combating label noise in deep learning using abstention.
\newblock In Kamalika Chaudhuri and Ruslan Salakhutdinov, editors,
  \emph{Proceedings of the 36th International Conference on Machine Learning},
  volume~97 of \emph{Proceedings of Machine Learning Research}, pages
  6234--6243, Long Beach, California, USA, 09--15 Jun 2019. PMLR.

\bibitem[Trapeznikov and Saligrama(2013)]{Trapeznikov:2013}
Kirill Trapeznikov and Venkatesh Saligrama.
\newblock Supervised sequential classification under budget constraints.
\newblock In Carlos~M. Carvalho and Pradeep Ravikumar, editors,
  \emph{Proceedings of the Sixteenth International Conference on Artificial
  Intelligence and Statistics}, volume~31 of \emph{Proceedings of Machine
  Learning Research}, pages 581--589, Scottsdale, Arizona, USA, 29 Apr--01 May
  2013. PMLR.

\bibitem[Varshney and Baral(2022)]{Varshney:2022}
Neeraj Varshney and Chitta Baral.
\newblock Model cascading: Towards jointly improving efficiency and accuracy of
  nlp systems.
\newblock \emph{arXiv preprint arXiv:2210.05528}, 2022.

\bibitem[Veniat and Denoyer(2018)]{Veniat:2018}
Tom Veniat and Ludovic Denoyer.
\newblock Learning time/memory-efficient deep architectures with budgeted super
  networks.
\newblock In \emph{IEEE Conference on Computer Vision and Pattern Recognition
  (CVPR)}, pages 3492--3500, 06 2018.
\newblock \doi{10.1109/CVPR.2018.00368}.

\bibitem[Viola and Jones(2001)]{Viola:2001}
P.~Viola and M.~Jones.
\newblock Rapid object detection using a boosted cascade of simple features.
\newblock In \emph{Proceedings of the 2001 IEEE Computer Society Conference on
  Computer Vision and Pattern Recognition. CVPR 2001}, volume~1, pages I--I,
  2001.
\newblock \doi{10.1109/CVPR.2001.990517}.

\bibitem[Vu et~al.(2020)Vu, Eder, Price, and Frahm]{Vu:2020}
T.~Vu, M.~Eder, T.~Price, and J.~Frahm.
\newblock Any-width networks.
\newblock In \emph{2020 IEEE/CVF Conference on Computer Vision and Pattern
  Recognition Workshops (CVPRW)}, pages 3018--3026, Los Alamitos, CA, USA, jun
  2020. IEEE Computer Society.
\newblock \doi{10.1109/CVPRW50498.2020.00360}.
\newblock URL
  \url{https://doi.ieeecomputersociety.org/10.1109/CVPRW50498.2020.00360}.

\bibitem[Wallace and Dahabreh(2012)]{Wallace:2012}
Byron~C. Wallace and Issa~J. Dahabreh.
\newblock Class probability estimates are unreliable for imbalanced data (and
  how to fix them).
\newblock In \emph{2012 IEEE 12th International Conference on Data Mining},
  pages 695--704, 2012.
\newblock \doi{10.1109/ICDM.2012.115}.

\bibitem[Wang et~al.(2022)Wang, Kondratyuk, Christiansen, Kitani,
  Movshovitz-Attias, and Eban]{wang2022wisdom}
Xiaofang Wang, Dan Kondratyuk, Eric Christiansen, Kris~M. Kitani, Yair
  Movshovitz-Attias, and Elad Eban.
\newblock Wisdom of committees: An overlooked approach to faster and more
  accurate models.
\newblock In \emph{International Conference on Learning Representations}, 2022.
\newblock URL \url{https://openreview.net/forum?id=MvO2t0vbs4-}.

\bibitem[Wang et~al.(2018)Wang, Luo, Crankshaw, Tumanov, Yu, and
  Gonzalez]{Wang:2018}
Xin Wang, Yujia Luo, Daniel Crankshaw, Alexey Tumanov, Fisher Yu, and Joseph~E.
  Gonzalez.
\newblock {IDK} cascades: Fast deep learning by learning not to overthink.
\newblock In Amir Globerson and Ricardo Silva, editors, \emph{Proceedings of
  the Thirty-Fourth Conference on Uncertainty in Artificial Intelligence, {UAI}
  2018, Monterey, California, USA, August 6-10, 2018}, pages 580--590. {AUAI}
  Press, 2018.

\bibitem[Xin et~al.(2020{\natexlab{a}})Xin, Nogueira, Yu, and Lin]{Xin:2020b}
Ji~Xin, Rodrigo Nogueira, Yaoliang Yu, and Jimmy Lin.
\newblock Early exiting {BERT} for efficient document ranking.
\newblock In \emph{Proceedings of SustaiNLP: Workshop on Simple and Efficient
  Natural Language Processing}, pages 83--88, Online, November
  2020{\natexlab{a}}. Association for Computational Linguistics.

\bibitem[Xin et~al.(2020{\natexlab{b}})Xin, Tang, Lee, Yu, and Lin]{Xin:2020}
Ji~Xin, Raphael Tang, Jaejun Lee, Yaoliang Yu, and Jimmy Lin.
\newblock {D}ee{BERT}: Dynamic early exiting for accelerating {BERT} inference.
\newblock In \emph{Proceedings of the 58th Annual Meeting of the Association
  for Computational Linguistics}, pages 2246--2251, Online, July
  2020{\natexlab{b}}. Association for Computational Linguistics.
\newblock \doi{10.18653/v1/2020.acl-main.204}.
\newblock URL \url{https://aclanthology.org/2020.acl-main.204}.

\bibitem[Zhang and Viola(2008)]{Zhang:2008}
Cha Zhang and Paul Viola.
\newblock Multiple-instance pruning for learning efficient cascade detectors.
\newblock In J.~Platt, D.~Koller, Y.~Singer, and S.~Roweis, editors,
  \emph{Advances in Neural Information Processing Systems}, volume~20. Curran
  Associates, Inc., 2008.

\bibitem[Zhou et~al.(2020)Zhou, Xu, Ge, McAuley, Xu, and Wei]{Zhou:2020}
Wangchunshu Zhou, Canwen Xu, Tao Ge, Julian McAuley, Ke~Xu, and Furu Wei.
\newblock {BERT} loses patience: Fast and robust inference with early exit.
\newblock In \emph{Advances in Neural Information Processing Systems},
  volume~33, pages 18330--18341. Curran Associates, Inc., 2020.

\bibitem[Zilberstein(1996)]{Zilberstein:1996}
Shlomo Zilberstein.
\newblock Using anytime algorithms in intelligent systems.
\newblock \emph{AI Magazine}, 17\penalty0 (3):\penalty0 73, Mar. 1996.
\newblock \doi{10.1609/aimag.v17i3.1232}.

\end{thebibliography}

\clearpage
\newpage

\addcontentsline{toc}{section}{Appendix} %
\part{} %

\begin{center}
{\LARGE \ourtitle{}}{\LARGE\par}
\par\end{center}

\begin{center}
\textcolor{black}{\Large{}Appendix}{\Large\par}
\par\end{center}

\parttoc %

\appendix

\section{Proofs}
\label{sec:proof}

\subsection{Proof of Proposition \ref{prop:oracle_k2}}
\begin{proof}[Proof of Proposition \ref{prop:oracle_k2}]

The risk in (\ref{eq:oracle_risk_k2}) can be written as
\begin{align*}
 & R(r;h^{(1)},h^{(2)})\\
 & =\Pr(y\neq h^{(1)}(x),r(x)=0)+\Pr(y\neq h^{(2)}(x),r(x)=1)+c\cdot\Pr(r(x)=1)\\
 & =\mathbb{E}\left[1[y\neq h^{(1)}(x)]\cdot1[r(x)=0]+1[y\neq h^{(2)}(x)]\cdot1[r(x)=1]+c1[r(x)=1]\right]\\
 & =\mathbb{E}\left[1[y\neq h^{(1)}(x)]\cdot\left(1-1[r(x)=1]\right)+1[y\neq h^{(2)}(x)]\cdot1[r(x)=1]+c1[r(x)=1]\right]\\
 & =\Pr(y\neq h^{(1)}(x))+\mathbb{E}\left[-1[y\neq h^{(1)}(x)]\cdot1[r(x)=1]+1[y\neq h^{(2)}(x)]\cdot1[r(x)=1]+c1[r(x)=1]\right]\\
 & =\Pr(y\neq h^{(1)}(x))+\mathbb{E}_{x}1[r(x)=1]\mathbb{E}_{y|x}\left[1[y\neq h^{(2)}(x)]-1[y\neq h^{(1)}(x)]+c\right]\\
 & =\Pr(y\neq h^{(1)}(x))+\mathbb{E}_{x}1[r(x)=1]\mathbb{E}_{y|x}\left[1[y=h^{(1)}(x)]-1[y=h^{(2)}(x)]+c\right]\\
 & =\Pr(y\neq h^{(1)}(x))+\mathbb{E}_{x}1[r(x)=1]\cdot\left[\eta_{h^{(1)}(x)}(x)-\eta_{h^{(2)}(x)}(x)+c\right],
\end{align*}
where we define $\eta_{y'}(x)\defEq\Pr(y=y'\mid x)$. Thus, it is
optimal to defer when
\begin{align*}
r(x)=1 & \iff\eta_{h^{(1)}(x)}(x)-\eta_{h^{(2)}(x)}(x)+c<0\\
 & \iff\eta_{h^{(2)}(x)}(x)-\eta_{h^{(1)}(x)}(x)>c.
\end{align*}

\end{proof}

\subsection{Proof of \Cref{corr:excess-risk}}
\begin{proof}[Proof of \Cref{corr:excess-risk}]
For fixed $h^{(1)},h^{(2)}$, we have already computed the Bayes-optimal
rejector $r^{*}$. Let $\alpha(x)\defEq\eta_{h^{(1)}(x)}(x)-\eta_{h^{(2)}(x)}(x)+c$.
Plugging $r^{*}$ into the risk results in Bayes-risk
\begin{align*}
R(r^{*};h^{(1)},h^{(2)}) & =\Pr(y\neq h^{(1)}(x))+\mathbb{E}_{x}1\left[r^{*}(x)=1\right]\cdot\alpha(x).
\end{align*}
 The excess risk for an arbitrary $r$ is thus
\begin{align*}
 & R(r;h^{(1)},h^{(2)})-R(r^{*};h^{(1)},h^{(2)})\\
 & =\mathbb{E}_{x}\left[1\left[r(x)=1\right]-1\left[\alpha(x)<0\right]\right]\cdot\alpha(x).
\end{align*}
\end{proof}

\subsection{Proof of \Cref{lemm:when-confidence-suffice}}
We start with \Cref{lem:same_acc} which will help prove \Cref{lemm:when-confidence-suffice}.
\begin{lemma}
\label{lem:same_acc}
Given two base classifiers $h^{(1)},h^{(2)}\colon\mathscr{X}\to[L]$,
two deferral rules $r_{1},r_{2}:\mathscr{X}\to\{0,1\}$ yield the
same accuracy for the cascade if and only if $\mathbb{E}\left( r_{1}(x)\beta(x) \right) = \mathbb{E}\left( r_{2}(x)\beta(x) \right)$,
where $\beta(x)\defEq\eta_{h^{(2)}(x)}(x)-\eta_{h^{(1)}(x)}(x)$.
\end{lemma}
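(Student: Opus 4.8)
The plan is to write the cascade's accuracy explicitly as a functional of the deferral rule $r$, then isolate the part that depends on $r$ from the part that does not. Recall that the cascade uses $h^{(1)}$ when $r(x)=0$ and $h^{(2)}$ when $r(x)=1$, so its accuracy is
\begin{align*}
\mathrm{Acc}(r) = \Pr(y = h^{(1)}(x),\, r(x)=0) + \Pr(y = h^{(2)}(x),\, r(x)=1).
\end{align*}
The first step is to rewrite this as an expectation of indicators and substitute $\1[r(x)=0] = 1 - \1[r(x)=1]$, exactly as in the proof of Proposition~\ref{prop:oracle_k2}. This should collapse the expression into a constant term plus an $r$-dependent term.

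Carrying out the algebra, I would obtain
\begin{align*}
\mathrm{Acc}(r) = \Pr(y = h^{(1)}(x)) + \mathbb{E}\left[\1[r(x)=1]\left(\1[y = h^{(2)}(x)] - \1[y = h^{(1)}(x)]\right)\right].
\end{align*}
The crucial move is to take the inner conditional expectation over $y \mid x$ inside the bracket, which turns $\1[y = h^{(i)}(x)]$ into $\eta_{h^{(i)}(x)}(x)$ and hence the bracketed quantity into $\beta(x) = \eta_{h^{(2)}(x)}(x) - \eta_{h^{(1)}(x)}(x)$. Since $r$ is $\{0,1\}$-valued we have $\1[r(x)=1] = r(x)$, giving the clean decomposition $\mathrm{Acc}(r) = \Pr(y = h^{(1)}(x)) + \mathbb{E}_x[r(x)\beta(x)]$.

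The conclusion is then immediate: the leading term $\Pr(y = h^{(1)}(x))$ is independent of the deferral rule, so $\mathrm{Acc}(r_1) = \mathrm{Acc}(r_2)$ holds if and only if $\mathbb{E}_x[r_1(x)\beta(x)] = \mathbb{E}_x[r_2(x)\beta(x)]$, which is exactly the claimed equivalence. I do not anticipate a genuine obstacle here; the entire argument is a direct computation. The only point requiring a little care is the reduction of $\1[y=h^{(i)}(x)]$ to $\eta_{h^{(i)}(x)}(x)$, which relies on the tower property (conditioning on $x$ first) together with the fact that $h^{(i)}(x)$ and $r(x)$ are deterministic given $x$; I would state this explicitly so that the exchange of expectations is justified rather than assumed.
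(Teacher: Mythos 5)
Your proof is correct and follows essentially the same route as the paper's: both decompose the cascade accuracy as $\Pr(y = h^{(1)}(x)) + \mathbb{E}_x\left[ r(x)\,\beta(x) \right]$ via the substitution $\boldsymbol{1}[r(x)=0] = 1 - \boldsymbol{1}[r(x)=1]$ and the tower property, after which the equivalence is immediate. Your explicit remark about justifying the conditioning step is a fine (if minor) addition of rigour over the paper's presentation.
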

\begin{proof}
For a deferral rule $r$, by definition, the accuracy is given by
\begin{align*}
A(r) & =\mathbb{E}_{(x,y)}\left[\boldsymbol{1}[h^{(1)}(x)=y] \cdot (1-r(x))+\boldsymbol{1}[h^{(2)}(x)=y] \cdot r(x)\right]\\
 & =\mathbb{P}(h^{(1)}(x)=y)+\mathbb{E}\left[r(x) \cdot \left(\boldsymbol{1}[h^{(2)}(x)=y]-\boldsymbol{1}[h^{(1)}(x)=y]\right)\right]\\
 & =\mathbb{P}(h^{(1)}(x)=y)+\mathbb{E}_{x}r(x)\mathbb{E}_{y|x}\left[\boldsymbol{1}[h^{(2)}(x)=y]-\boldsymbol{1}[h^{(1)}(x)=y]\right]\\
 & =\mathbb{P}(h^{(1)}(x)=y)+\mathbb{E}_{x}r(x) \cdot \left(\eta_{h^{(2)}(x)}(x)-\eta_{h^{(1)}(x)}(x)\right).
\end{align*}
Given two deferral rules $r_{1},r_{2}$, $A(r_{1})=A(r_{2})\iff\mathbb{E}\left( r_{1}(x)\beta(x) \right) =\mathbb{E}\left( r_{2}(x)\beta(x) \right)$
where $\beta(x)\defEq\eta_{h^{(2)}(x)}(x)-\eta_{h^{(1)}(x)}(x)$.
\end{proof}

We are ready to prove \Cref{lemm:when-confidence-suffice}.

\begin{proof}[Proof of \Cref{lemm:when-confidence-suffice}]
Recall the confidence-based deferral rule and the Bayes-optimal rule
are respectively
\begin{align*}
r_{\mathrm{conf}}(x) & =\boldsymbol{1}\left[\eta_{h^{(1)}(x)}(x)<c'\right],\\
r^{*}(x) & =\boldsymbol{1}\left[\eta_{h^{(1)}(x)}(x)-\eta_{h^{(2)}(x)}(x)<c\right].
\end{align*}
For brevity, we use $\eta_{i}(x)$ and $\eta_{h^{(i)}(x)}(x)$ interchangeably.
Define real-valued random variables $z\defEq\eta_{1}(x)$, and $z^{*}\defEq\eta_{1}(x)-\eta_{2}(x)$
where $x\sim\mathbb{P}_{x}$. Let $\rho(c')\defEq\mathbb{E}\left(\boldsymbol{1} \left[\eta_{1}(x)<c'\right] \right)=\mathbb{P}\left(\eta_{1}(x)<c'\right)$
be the deferral rate of the confidence-based deferral rule with threshold
$c'$. Similarly, define $\rho^{*}(c)\defEq \mathbb{E}(\boldsymbol{1}\left[\eta_{1}(x)-\eta_{2}(x)<c\right]).$
Let $\gamma_{\alpha},\gamma_{\alpha}^{*}$
be the $\alpha$-quantile of the distributions of $z$ and $z^{*}$,
respectively. By definition, $\rho(\gamma_{\alpha})=\rho^{*}(\gamma_{\alpha}^{*})=\alpha$.

Let $A(r,c')$ denote the accuracy of the cascade with the deferral
rule $r$ and the deferral threshold $c'$. 
Formally, the deferral curve of $r_{\mathrm{conf}}$ is the set of
deferral rate-accuracy tuples $\{(\rho(c'),A(r_{\mathrm{conf}},c'))\mid c'\in[0,1]\}$.
The same deferral curve may be generated with $\{(\alpha,A(r_{\mathrm{conf}},\gamma_{\alpha})) \mid\alpha\in[0,1]\}$.
Similarly, for the Bayes-optimal rule, the deferral curve is defined
as $\{(\alpha,A(r^{*},\gamma_{\alpha}^{*})\mid\alpha\in[0,1]\}$.

To show that the two deferral rules produce the same deferral curve,
we show that $A(r_{\mathrm{conf}},\gamma_{\alpha})=A(r^{*},\gamma_{\alpha}^{*})$
for any $\alpha\in[0,1]$. By Lemma \ref{lem:same_acc}, this is equivalent
to showing that 
\begin{align}
\mathbb{E}\left(\boldsymbol{1}[\eta_1(x) <\gamma_{\alpha}] \cdot \beta(x) \right) & =\mathbb{E}\left(\boldsymbol{1}[ \eta_1(x) - \eta_2(x) <\gamma_{\alpha}^{*}] \cdot \beta(x)\right),\label{eq:chow_oracle_equiv_cond}
\end{align}
where $\beta(x)\defEq\eta_{2}(x)-\eta_{1}(x)$.

Suppose that $\eta_{1}(x)$ and $\eta_{1}(x)-\eta_{2}(x)$ produce
the same ordering over instances $x\in\mathscr{X}$. This means that
for any $x\in\mathscr{X}$ and $\alpha\in[0,1]$, $\eta_{1}(x) <\gamma_{\alpha}\iff  \eta_{1}(x) - \eta_{2}(x) < \gamma_{\alpha}^{*}$.
Thus, (\ref{eq:chow_oracle_equiv_cond}) holds. 
\end{proof}

\subsection{Proof of Lemma \ref{lem:finite-sample}}
\label{app:finite-sample}
We provide an excess risk bound in Lemma \ref{lem:excess-risk} and generalization bound in Lemma \ref{lem:def-genbound}. The proof of Lemma \ref{lem:finite-sample} follows from combining the two results.

\textbf{Excess risk bound.} The excess risk for the learned deferral rule can be bounded as follows:
\begin{lemma}
\label{lem:excess-risk} 
The excess risk for $\hat{r}$ is bounded by:
$$
R(\hat{r};h^{(1)},h^{(2)}) - R(r^*;h^{(1)},h^{(2)})
\leq 
2\sqrt{\E_x\left[
      ( \hat{g}(x)-g^*(x) )^2 
    \right]}.
$$
\end{lemma}
\begin{proof}
We first expand the risk for deferral rule $\hat{r}$:
\begin{align*}
 R(\hat{r};h^{(1)},h^{(2)}) &= 
 \E\left[
     1[y\neq h^{(1)}(x),\hat{g}(x) \leq c]
        +
     1[y\neq h^{(2)}(x),\hat{g}(x) > c]+c \cdot 1[\hat{g}(x) > c]
    \right]\\
&= \E_x\left[
     ( 1 - \eta_{h^{(1)}(x)}(x) )\cdot 1[\hat{g}(x) \leq c]
        +
     (1 - \eta_{h^{(2)}(x)}(x) + c)\cdot 1[\hat{g}(x) > c]
    \right]\\
&= \E_x\left[
     (\eta_{h^{(1)}(x)}(x) - \eta_{h^{(2)}(x)}(x) + c)\cdot 1[\hat{g}(x) > c]
    \right]
    + \E_x\left[1 - \eta_{h^{(1)}(x)}(x)\right]\\
&= \E_x\left[
     (-g^*(x) + c)\cdot 1[\hat{g}(x) > c]
    \right]
    + \E_x\left[1 - \eta_{h^{(1)}(x)}(x)\right].
\end{align*}
Per Corollary~\ref{corr:excess-risk}, the excess risk for $\hat{r}$ can then be written as:
\begin{align*}
 \lefteqn{ R(\hat{r};h^{(1)},h^{(2)}) - R(r^*;h^{(1)},h^{(2)}) }\\
&= \E_x\left[
     (-g^*(x) + c)\cdot 1[\hat{g}(x) > c]
    \right]
    \,-\,
    \E_x\left[
     (-g^*(x) + c)\cdot 1[g^*(x) > c]
    \right]\\
&= \E_x\left[
     (-g^*(x) + c)\cdot 1[\hat{g}(x) > c]
    \right]
    \,-\,
    \E_x\left[
     (-\hat{g}(x) + c)\cdot 1[g^*(x) > c]
    \right]\\
&\hspace{1cm}
    \,+\,
    \E_x\left[
     (-\hat{g}(x) + c)\cdot 1[g^*(x) > c]
    \right]
    \,-\,
    \E_x\left[
     (-g^*(x) + c)\cdot 1[g^*(x) > c]
    \right].
\end{align*}
We next bound the second term on the right-hand side. Note that
$
(-\hat{g}(x) + c)\cdot 1[g'(x) > c]
$
is minimized when $g'(x) = \hat{g}(x)$. Therefore
$
(-\hat{g}(x) + c)\cdot 1[g^*(x) > c] 
\geq
(-\hat{g}(x) + c)\cdot 1[\hat{g}(x) > c] 
$. 
Plugging this into the above inequality gives us:
\begin{align*}
 \lefteqn{ R(\hat{r};h^{(1)},h^{(2)}) - R(r^*;h^{(1)},h^{(2)}) }\\
&\leq \E_x\left[
     (-g^*(x) + c)\cdot 1[\hat{g}(x) > c]
    \right]
    \,-\,
    \E_x\left[
     (-\hat{g}(x) + c)\cdot 1[\hat{g}(x) > c]
    \right]\\
&\hspace{1cm}
    \,+\,
    \E_x\left[
     (-\hat{g}(x) + c)\cdot 1[g^*(x) > c]
    \right]
    \,-\,
    \E_x\left[
     (-g^*(x) + c)\cdot 1[g^*(x) > c]
    \right]\\
&= \E_x\left[
     ( \hat{g}(x)-g^*(x) )\cdot 1[\hat{g}(x) > c]
    \right]
    \,+\,
    \E_x\left[
     (g^*(x)-\hat{g}(x))\cdot 1[g^*(x) > c]
    \right]
    \\
&\leq
    \E_x\left[
     | \hat{g}(x)-g^*(x) |\cdot (1[\hat{g}(x) > c]
    \,+\, 1[g^*(x) > c])
    \right]
\\
&\leq 
    2 \cdot \E_x\left[
     | \hat{g}(x)-g^*(x) |
    \right]
\\
&\leq
    2 \cdot \E_x\left[
     \sqrt{ ( \hat{g}(x)-g^*(x) )^2 }
    \right]
\\
&\leq
    2 \cdot \sqrt{\E_x\left[
      ( \hat{g}(x)-g^*(x) )^2 
    \right]},
\end{align*}
where the second-last step follows from Jensen's inequality and the fact that $\sqrt{\cdot}$ is concave.
\end{proof}

\textbf{Finite-sample bound.} Next, we bound the right-hand side of the excess risk bound in Lemma \ref{lem:excess-risk} with a finite-sample bound. 
For this, define the population squared loss for a scorer $g$ by:
\begin{align*}
    {R}_{\rm sq}(g) &= \E\left[ ( z - g( x ) )^2 \right],
\end{align*}
where $z = 1[ y = h^{(2)}( x )] - 1[y = h^{(1)}( x ) ].$ 
It is straight-forward to see that the minimiser of this loss over all measurable functions $g: \XCal\rightarrow \R$ is $\E[z|x] =  \eta_{h^{(2)}(x)}(x)-\eta_{h^{(1)}(x)}(x) = g^*(x)$. Let $\hat{g}$ be minimizer of the average squared loss 
$\frac{1}{| S_{\rm{val}} |} \sum_{( x_i, z_i ) \in S_{\rm val}} ( z_i - g( x_i ) )^2$ over hypothesis class $\mathcal{G}$, where the labels in  $S_{\rm{val}}$ are given by $z_i = 1[ y_i = h^{(2)}( x_i )] - 1[y_i = h^{(1)}( x_i ) ]$. 

\begin{lemma}
\label{lem:def-genbound}
Let $\mathcal{N}(\mathcal{G}, \epsilon)$ denote the covering number for $\mathcal{G}$ with the $\infty$-norm.  Suppose for any $g \in \mathcal{G}$, the squared loss $(z - g(x))^2 \leq B, \forall (x, z)$. Furthermore, let $\tilde{g}$ denote the minimizer of the population squared loss $\E\left[(z - g(x))^2\right]$ over $\mathcal{G}$, where $z = 1[ y = h^{(2)}( x )] - 1[y = h^{(1)}( x ) ]$. Then for any $\delta \in (0,1)$, with probability at least $1 - \delta$ over draw of $S_{\rm val}$, 
\begin{align*}
\lefteqn{\E_x\left[ (\hat{g}(x)  -  g^*(x))^2\right]}\\
&\leq\,
\underbrace{
    \E_x\left[ (\tilde{g}(x)  -  g^*(x))^2\right]}_{\text{Approximation error}}  + 
    \underbrace{4\cdot\inf_{\epsilon > 0}
    \left\{
        B\sqrt{
            \frac{ 2\cdot\log\mathcal{N}(\mathcal{G}, \epsilon) }{|S_{\rm val}|}
        }
    \right\}
    \,+\,
    \mathcal{O}\left(\sqrt{\frac{\log(1/
    \delta)}{|S_{\rm val}|}}\right)}_{\text{Estimation error}}. 
\end{align*}
\end{lemma}
\begin{proof}
We have from standard uniform convergence arguments (Shalev-Shwartz \& Ben-David, 2014), the following generalization bound for the squared loss: with probability at least $1 - \delta$ over draw of $S_{\rm val}$, for any $g \in \mathcal{G}$:
\begin{align}
\left|R_{\rm sq}(g) - \hat{R}_{\rm sq}(g) \right|
    \,\leq\, 
    2\cdot\inf_{\epsilon > 0}
    \left\{
        \epsilon + 
        B\sqrt{
            \frac{ 2\cdot\log\mathcal{N}(\mathcal{G}, \epsilon) }{|S_{\rm val}|}
        }
    \right\}
    \,+\,
    \mathcal{O}\left(\sqrt{\frac{\log(1/
    \delta)}{|S_{\rm val}|}}\right).
    \label{eq:R-sq-gen-bound}
\end{align}
Expanding the population squared loss for a scorer $g$, we have:
\begin{align*}
    R_{\rm sq}(g) 
    &=  \E_x\left[ \E_{z|x}\left[ (z - g(x))^2 \right] \right]\\
    &= \E_x\left[ \E_{z|x}\left[ z^2 \right]  - 2 \cdot \E_{z|x}\left[ z \right] \cdot g(x) + g(x)^2\right]\\
    &= \E_x\left[ \E_{z|x}\left[ z \right]  - 2 \cdot \E_{z|x}\left[ z \right] \cdot g(x) + g(x)^2\right] + \E\left[z^2 - z\right]\\
    &= \E_x\left[ (\E_{z|x}\left[ z \right]  -  g(x))^2\right] + \E\left[z^2 - z\right]\\
    &= \E_x\left[ (g^*(x)  -  g(x))^2\right] + \E\left[z^2 - z\right].
\end{align*}
Notice that the second term is independent of $g$. 
The excess squared loss for any scorer $g$ can then be written as:
\begin{align}
    R_{\rm sq}(g) - R_{\rm sq}(g^*) = \E_x\left[ (g(x)  -  g^*(x))^2\right].
    \label{eq:excess-sq-risk}
\end{align}

It then follows that:
\begin{align*}
 \E_x\left[ (\hat{g}(x)  -  g^*(x))^2\right] 
 &= R_{\rm sq}(\hat{g}) - R_{\rm sq}(g^*)\\
 &= R_{\rm sq}(\hat{g}) - R_{\rm sq}(\tilde{g}) + R_{\rm sq}(\tilde{g}) - R_{\rm sq}(g^*)\\
  &\stackrel{(a)}{=} R_{\rm sq}(\hat{g}) - R_{\rm sq}(\tilde{g}) + \E_x\left[ (\tilde{g}(x)  -  g^*(x))^2\right]\\
 &= R_{\rm sq}(\hat{g}) - \hat{R}_{\rm sq}(\tilde{g}) + \hat{R}_{\rm sq}(\tilde{g}) - R_{\rm sq}(\tilde{g}) + \E_x\left[ (\tilde{g}(x)  -  g^*(x))^2\right]\\
 & \stackrel{(b)}{\leq} R_{\rm sq}(\hat{g}) - \hat{R}_{\rm sq}(\hat{g}) + \hat{R}_{\rm sq}(\tilde{g}) - R_{\rm sq}(\tilde{g}) + \E_x\left[ (\tilde{g}(x)  -  g^*(x))^2\right]\\
 &\leq |R_{\rm sq}(\hat{g}) - \hat{R}_{\rm sq}(\hat{g})| + |\hat{R}_{\rm sq}(\tilde{g}) - R_{\rm sq}(\tilde{g})| + \E_x\left[ (\tilde{g}(x)  -  g^*(x))^2\right]\\
 &\leq 2\cdot \sup_{g \in \mathcal{G}}\, |{R}_{\rm sq}(g) - \hat{R}_{\rm sq}(g)| + \E_x\left[ (\tilde{g}(x)  -  g^*(x))^2\right],
\end{align*}
where step $(a)$ uses \eqref{eq:excess-sq-risk};
step $(b)$ uses the fact that $\hat{g}$ is the minimizer of $\hat{R}_{\rm sq}$ over $\mathcal{G}$. 
Upper bounding the right-hand side using the generalization bound in \eqref{eq:R-sq-gen-bound} completes the proof.
\end{proof}

\section{Amount of Compute Used for Experiments}
There are two types of models involved in all experiments: 
(1) base classifiers (i.e., $h^{(1)}$ and $h^{(2)}$), and 
(2) post-hoc model. 
For post-hoc model training and evaluation, we use one Nvidia V100 GPU. 
As discussed in \Cref{sec:post_hoc_model_params}, our post-hoc model is only a small
MLP model with only two hidden layers. In each experiment reported in the main text, training one post-hoc model for 20 epochs only takes a few minutes.
For training and evaluating a post-hoc model, a GPU is  not needed.

For training of base classifiers, the amount of compute varies depending on the dataset and model architecture. This is summarized in the following table. In the following table, a GPU always refers
to an Nvidia V100 GPU, and a TPU always refers to a Google Cloud TPU v3.\footnote{Google Cloud TPU v3: \url{https://cloud.google.com/tpu/docs/system-architecture-tpu-vm.}}

\begin{center}
\begin{tabular}{lllp{7cm}}
\toprule 
Dataset & Model & Devices & Approximate Training Time
\tabularnewline
\midrule 
CIFAR 100  & CIFAR ResNet 8 & $8\times$ GPUs & 12m (batch size: 1024, 256 epochs) \\
CIFAR 100  & CIFAR ResNet 14 & $8\times$ GPUs & 20m (batch size: 1024, 256 epochs)\\
CIFAR 100  & CIFAR ResNet 56 & $8\times$ GPUs & 20m (batch size: 1024, 256 epochs)\\
ImageNet  & MobileNet V2 & $8\times$  TPUs & 7h (batch size: 64, 90 epochs) \\
ImageNet  & EfficientNet B0 & $8\times$  TPUs & 4h (batch size: 1024, 90 epochs) \\

\bottomrule
\end{tabular}
\end{center}

\newpage
\section{Experimental Setup: Hyper-parameters}
\label{app:hyperparams}

\subsection{Training of Models in a Cascade}

We describe hyperparameters we used for training all base models (i.e., $h^{(1)}$ and $h^{(2)}$).
In the following table, BS denotes batch size, and schedule refers
to learning rate schedule. 

\begin{center}
\begin{tabular}{>{\raggedright}p{4cm}lcccc}
\toprule 
Dataset & Model & LR & Schedule & Epochs & BS\tabularnewline
\midrule 
\multirow{2}{2.5cm}{Standard ImageNet} & MobileNet V2 & 0.05 & anneal & 90 & 64\tabularnewline
 & EfficientNet B0 & 0.1 & cosine & 90 & 1024\tabularnewline
\midrule
ImageNet Dog (specialist) & EfficientNet B0 & 0.1 & cosine & 90 & 512\tabularnewline
\midrule
\multirow{2}{2.5cm}{CIFAR 100} & CIFAR ResNet 8 & 1.0 & anneal & 256 & 1024\tabularnewline
 & CIFAR ResNet 14 & 1.0 & anneal & 256 & 1024\tabularnewline
 & CIFAR ResNet 56 & 1.0 & anneal & 256 & 1024\tabularnewline

\bottomrule
\end{tabular}
\end{center}

We use SGD with momentum as the optimisation
algorithm for all models. For annealing schedule, the specified learning
rate (LR) is the initial rate, which is then decayed by a factor of
ten after each epoch in a specified list. For CIFAR100, these epochs
are 15, 96, 192 and 224. For ImageNet, these epochs are 5, 30, 60,
and 80. 
In the  above table, CIFAR 100 includes the standard CIFAR 100 classification problem, CIFAR 100 with label noise, and CIFAR 100 with distribution shift as discussed in \Cref{fig:three_settings_grid}.

\subsection{Training of Post-hoc Deferral Rules}
\label{sec:post_hoc_model_params}
All post-hoc models $g\colon\XCal\to\mathbb{R}$ we consider are based
on a lightweight Multi-Layer Perceptron (MLP) that takes as input a small set of features constructed from  probability outputs from model 1. 
More precisely, let $p^{(1)}(x)\in\Delta_{L}$ denote all probability outputs from model 1 for an $L$-class classification problem. 
Let $v(p^{(1)}(x)) \in \mathbb{R}^D$ be a list of features extracted from the probability outputs. 
In all experiments, the post-hoc model is $g(x)=\mathrm{MLP}(v(p^{(1)}(x)))$ with $v$ producing $D=L+11$ features.  These features are
\begin{enumerate}
    \item The entropy of $p^{(1)}(x)$.
    \item Top 10 highest probability values of $p^{(1)}(x)$.
    \item One-hot encoding of $\arg \max_{y'} p^{(1)}_{y'}$ (i.e., an $L$-dimensional binary vector).
\end{enumerate}

For the MLP, let $\mathrm{FC}_{K}$ denote a fully connected layer
with $K$ output units without an activation. Let $\mathrm{FC}_{K,f}$ denote
a fully connected layer with $K$ output units with $f$ as the activation.
In all experiments involving post-hoc rules, we use 
\begin{align*}
    g(x) = (\mathrm{FC}_{1}\circ\mathrm{FC}_{2^4,\mathrm{ReLU}}\circ\mathrm{FC}_{2^6,\mathrm{ReLU}} \circ v \circ p^{(1)})(x),
\end{align*}
where  ReLU denotes the Rectified Linear Unit.
Note that both the MLP and the set of input features to $g$ are small. We found that
a post-hoc model can easily overfit to its training set. Controlling its capacity
helps mitigate this issue.

To train $g$, we use Adam \citep{kingma2014adam}
as the optimisation algorithm with a constant learning rate of 0.0007
and batch size 128. For CIFAR 100 experiments, we use a held-out set of size 5000
as the training set. For ImageNet experiments, we use an held-out set of size 10000.
We train for 20 epochs.
Further, an L2 regularization of weights in each layer in the MLP is also added to the training objective. We set the  regularization weight to 0.001.

\newpage
\section{Calibration Analysis}
\label{app:calibration}

To further study the performance of confidence-based deferral, 
Figure \ref{fig:m0w_m1r_chow} presents \emph{calibration plots}~\citep{Niculescu-Mizil:2005,Guo:2017}.
These ideally visualise 
$\Pr( y \neq h^{(1)}( x ) \land y = h^{(2)}( x ) \mid x \in \mathscr{X}_q )$ for $q \in [ 0, 1 ]$,
where $\mathscr{X}_q \defEq \{ x \in \mathscr{X} \colon \max_{y'} p^{(1)}_{y'}( x ) = q \}$ is the set of samples where model 1's confidence equals $q$.
In practice, we discretise $q$ into $10$ buckets.

\begin{figure}[!h]
    \centering 
     \subcaptionbox{MobileNet V2 and EfficientNet B0 on ImageNet.}{\includegraphics[scale=0.4]{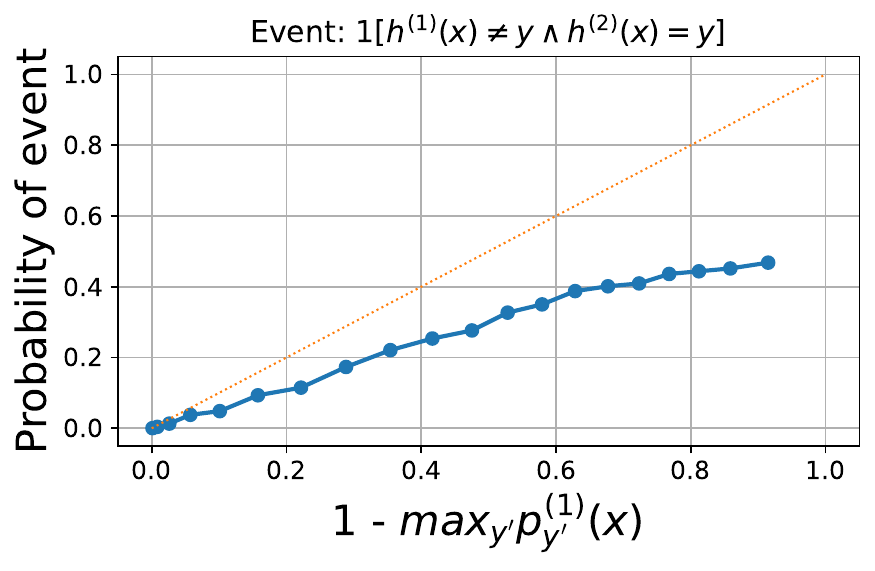}}
    \qquad
     \subcaptionbox{MobileNet V2 and \textbf{dog specialist} EfficientNet B0 on ImageNet.}{\includegraphics[scale=0.4]{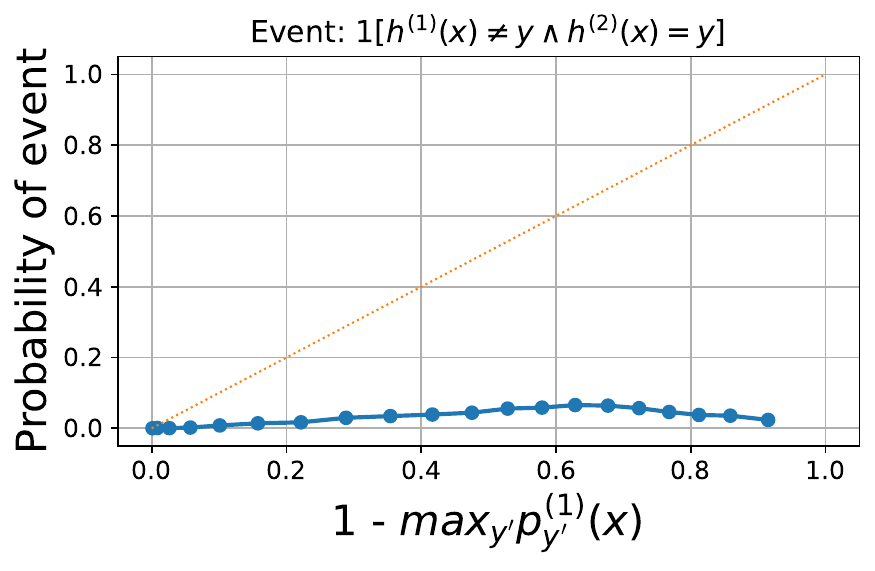}}
 
    \caption{Calibration plots visualising the empirical probability of the event 
    $h_1(x) \neq y \land h_2(x) = y$, 
    i.e., the first model predicts incorrectly and the second model predicts correctly. 
    In settings where confidence-based deferral performs poorly, the first model's confidence is a poor predictor of whether the second model makes a correct prediction: 
    the likelihood of this event tends to be systematically \emph{over-estimated}.}
    \label{fig:m0w_m1r_chow}
\end{figure}

These plots reveal that, as expected,
the confidence of $h^{(1)}$ may be a poor predictor of when $h^{(1)}$ is wrong but $h^{(2)}$ is right:
indeed, it systematically \emph{over-estimates} this probability, and thus may result in erroneously forwarding samples to the second model.
This suggests using post-hoc estimates of when $h^{(1)}$ is wrong, but $h^{(2)}$ is right.

We remark here that in a different context,~\citet{narayan2022} showed that certain cascades of pre-trained language models may be amenable to confidence-based deferral:
the confidence of the small model can predict samples where the small model is wrong, but the large model is correct.
In this work, we focus on image classification settings. 
Generalising our analysis to natural langauge processing models will be an interesting
topic for future study.

\section{Additional Experimental Results}
\label{app:more_results}

In this section, we present more experimental results to support our analysis in \Cref{sec:conf_vs_oracle}.

\subsection{Confidence-Based Deferral Underperforms Under Label Noise}
As illustrated in \S\ref{sec:chow_suffices}, confidence-based deferral
can be sub-optimal when there is label noise. 
The label noise experiment in \S\ref{sec:chow_suffices} is based on  synthetic label noise injected to clean data to have a controlled experiment.
The aim of this section is to confirm that we have the same conclusion with a real problem that has label noise.
Here, we consider the Mini-ImageNet dataset (with noise rate 60\%) from \cite{jiang2020}. 
The two base models in the cascade are two  ResNet 10 models with widths 16 (small model) and 64 (large model). Figure~\ref{fig:mini_imagenet_label_noise} shows the results.
As in the main text, \texttt{Diff-01} significantly outperforms Confidence based deferral in this setting.

\begin{figure}[h]
 \centering
\includegraphics[width=0.5\textwidth]{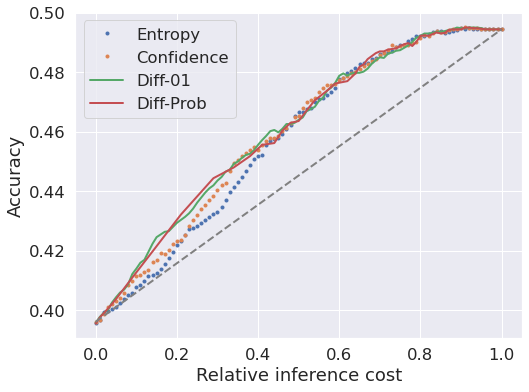}
 \caption{Test accuracy vs deferral rate of the post-hoc approaches (\texttt{Diff-01}, \texttt{Diff-Prob}), confidence thresholding (Confidence), and entropy thresholding (Entropy).
 We considered the Mini-ImageNet dataset (with noise rate 60\%) from \cite{jiang2020}. The two base models are ResNet 10 models with widths 16 (small model) and 64 (large model). Consistent with our analysis, confidence-based deferral underperforms in the presence of label noise.
 }
 \label{fig:mini_imagenet_label_noise}
\end{figure}

\subsection{When Model 2 is Highly Accurate}

\begin{figure*}[h]
    \centering
    \resizebox{0.59\linewidth}{!}{
     \includegraphics[scale=0.39]{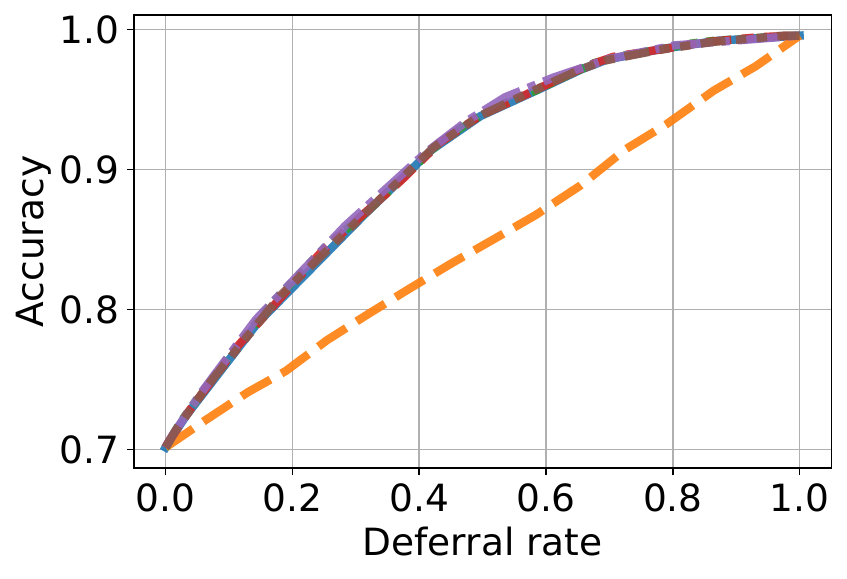}
     \includegraphics[scale=0.60,trim={0 -15mm 0 0},clip]{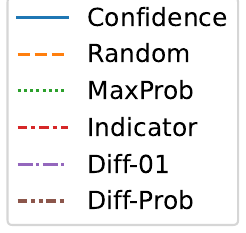}
    }
    \caption{Test accuracy vs deferral rate of post-hoc deferral rules (in \Cref{tbl:post_hoc_estimates}) when the second model is highly accurate. We observe that confidence-based deferral
    is competitive in this setting.
    }
    \label{fig:mnist_posthoc}
\end{figure*}

To illustrate that confidence-based deferral is optimal when the second model has
a constant error probability, we train an MLP model as $h^{(1)}$ and a CIFAR ResNet 56 as $h^{(2)}$ on the MNIST dataset \citep{lecun1998gradient}, a well known 10-class classification problem.
The MLP has one hidden layer composed of two hidden units with a ReLU activation, 
and a fully connected layer with no activation function to produce 10 logit scores (for 10 classes).
\Cref{fig:mnist_posthoc} presents our results.

\subsection{Oracle Curves}

\begin{figure*}[h]
    \centering
    \resizebox{0.69\linewidth}{!}{
    \label{fig:plug_oracle_specialist}
    \includegraphics[scale=0.42]{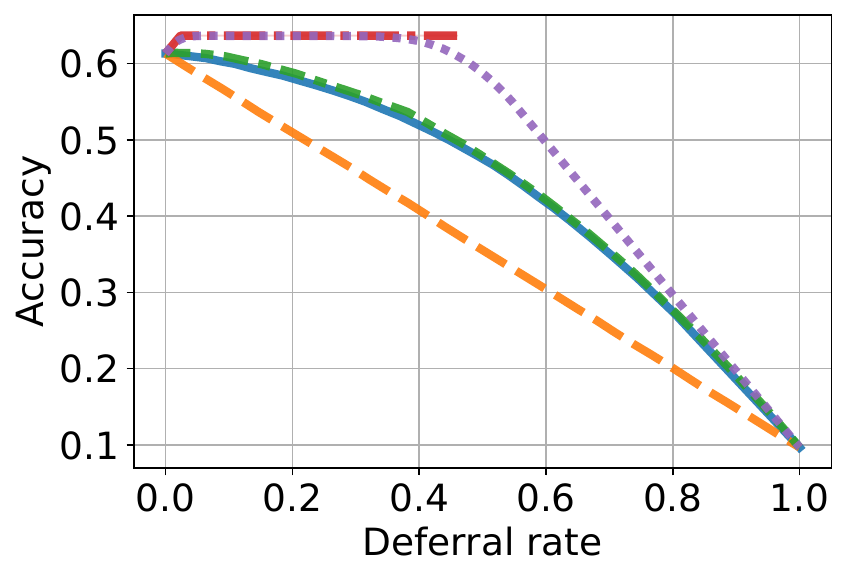}
    \includegraphics[scale=0.55,trim={0 -15mm 0 0},clip]{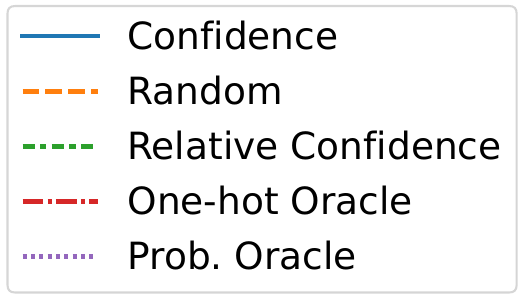}
    }
    \caption{Test accuracy vs deferral rate of different plug-in estimates
    for the oracle deferral rule in \eqref{eq:optimal_rule} (see \Cref{sec:plug_in_rule}). 
    The first model is trained on all ImageNet classes and the second model specialises
    on dog classes (i.e., and hence has non-uniform errors).
    In this case, accounting for the label probability of $h^{(2)}$ (as done by the oracle approaches) can theoretically improve upon  confidence-based deferral.  
    }
    \label{fig:plug_oracle_rule}
    
\end{figure*}

In this section, we illustrate  different plug-in estimates
for the oracle deferral rule in \eqref{eq:optimal_rule} (see \Cref{sec:plug_in_rule}).
We consider two base models: MobileNet V2 \citep{sandler2018mobilenetv2} and EfficientNet B0 \citep{tan2019efficientnet} as $h^{(1)}$ and $h^{(2)}$, which are trained independently on ImageNet. The second model  is trained on a subset of ImageNet dataset containing only ``dog'' synset (i.e., a dog specialist model). 
\Cref{fig:plug_oracle_rule} illustrates the performance of confidence-based deferral
along with  different plug-in estimates for the oracle deferral rule.

We see that in this specialist setting, confidence-based deferral is sub-optimal,
with a considerable gap to the oracle curves;
this is  expected, since here $\Pr( y = h^{(2)}( x ) )$ is highly variable for different $x$.
In particular, $\Pr( y = h^{(2)}( x ) ) \sim 1$ for dog images, but $\Pr( y = h^{(2)}( x ) ) \sim \frac{1}{L - \text{\#non-dog classes}}$ for other images. There are 119 classes in the dog synset.
Note that all Oracle curves are theoretical constructs that rely on the true label 
$y$. 
They serve as an upper bound on the performance what we could achieve when we
learn post-hoc deferral rules to imitate them.

\subsection{Confidence of a Dog-Specialist Model}
In this section, we report confidence of the two base models we use in the ImageNet dog-specialist setting in \Cref{fig:plug_oracle_rule}.
Recall that $p^{(1)}$ is MobileNet V2 \citep{sandler2018mobilenetv2} trained on the full ImageNet dataset (1000 classes), 
and $p^{(2)}$ is EfficientNet B0 \citep{tan2019efficientnet} trained only on the 
dog synset from ImageNet (119 classes).
We show empirical distributions of $\max_{y'} p^{(1)}_{y'}(x), \max_{y'} p^{(2)}_{y'}(x), \max_{y''} p^{(2)}_{y''}(x)-\max_{y'} p^{(1)}_{y'}(x), p^{(2)}_{y}(x)-p^{(1)}_{y}(x)$,
and $p^{(2)}_{y}(x)-p^{(1)}_{y}(x)$ in \Cref{fig:conf_hist}, grouped by the category
of each input image (Dog or Non-Dog).
These statistics are computed on the test set of ImageNet dataset, containing 50000 images.

\newcommand{\wid}{0.32\textwidth}
\label{sec:model_confidence}
\begin{figure*}[h]
    \centering
    \includegraphics[width=0.31\textwidth]{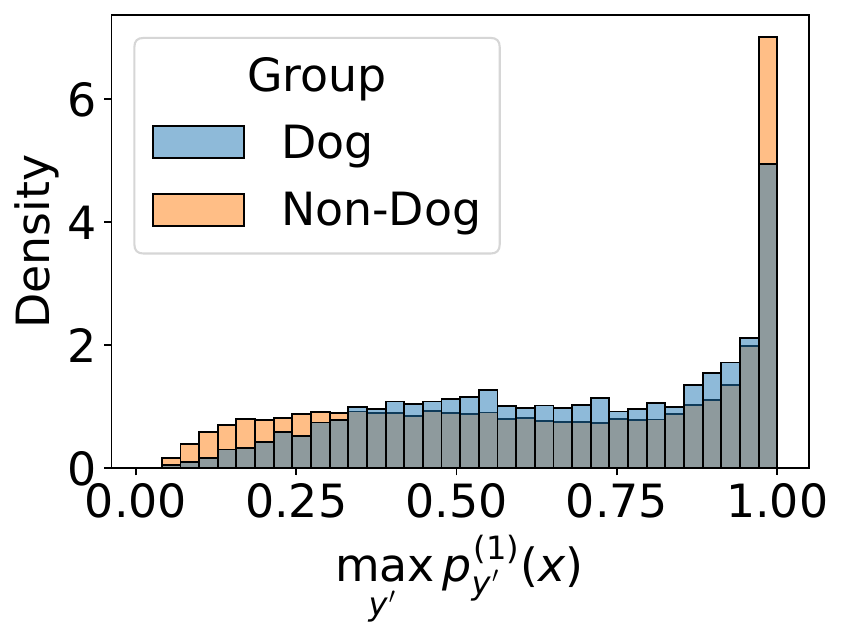}
    \includegraphics[width=\wid]{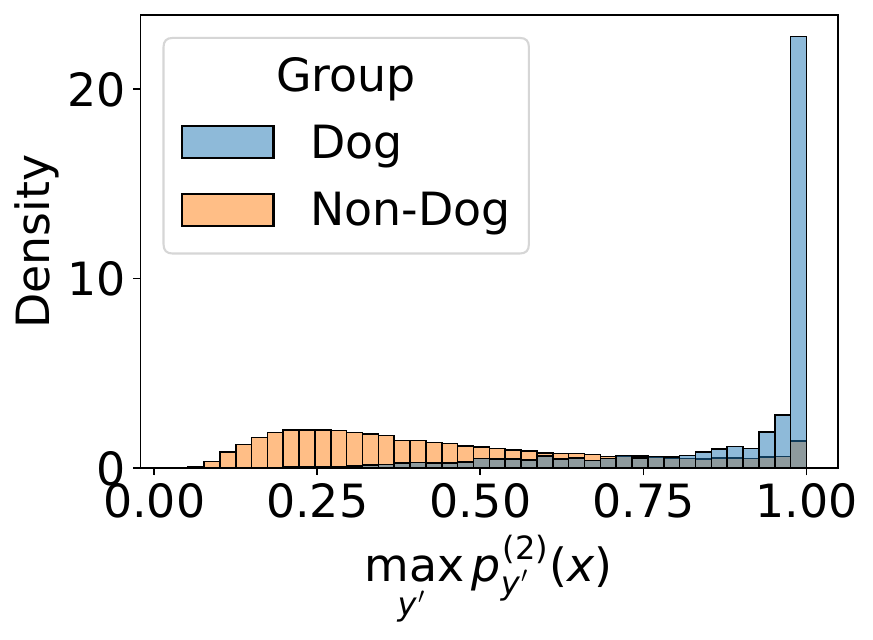}
    \includegraphics[width=\wid]{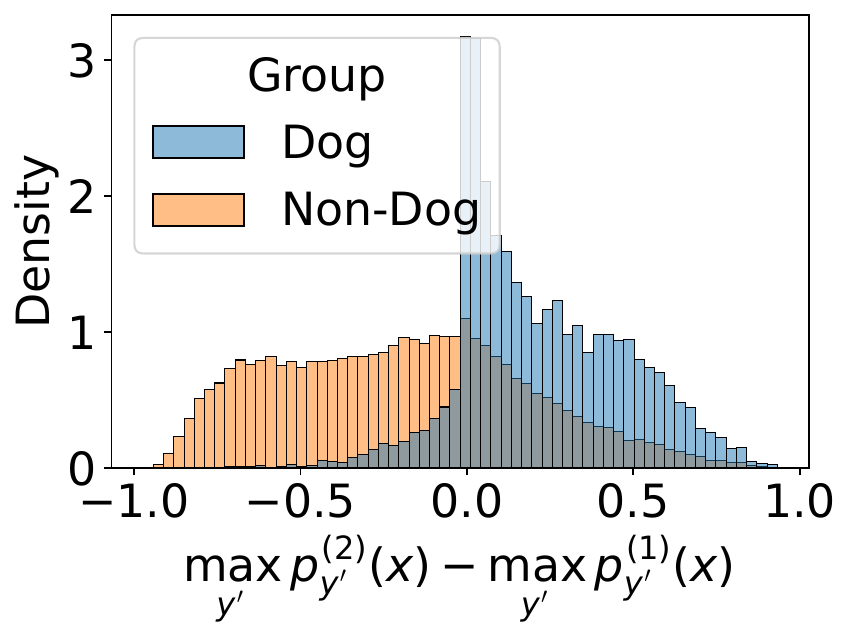} \\
    \includegraphics[width=\wid]{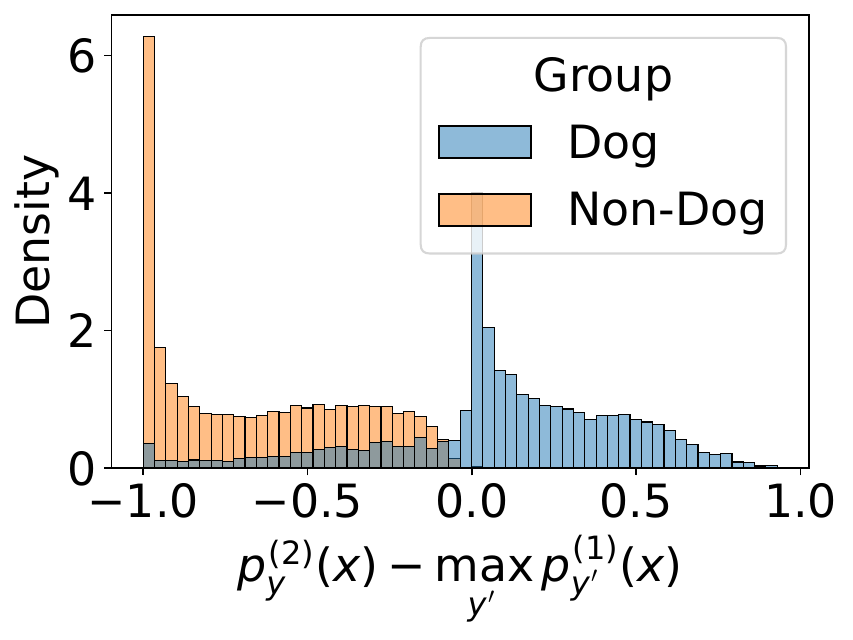}
    \includegraphics[width=\wid]{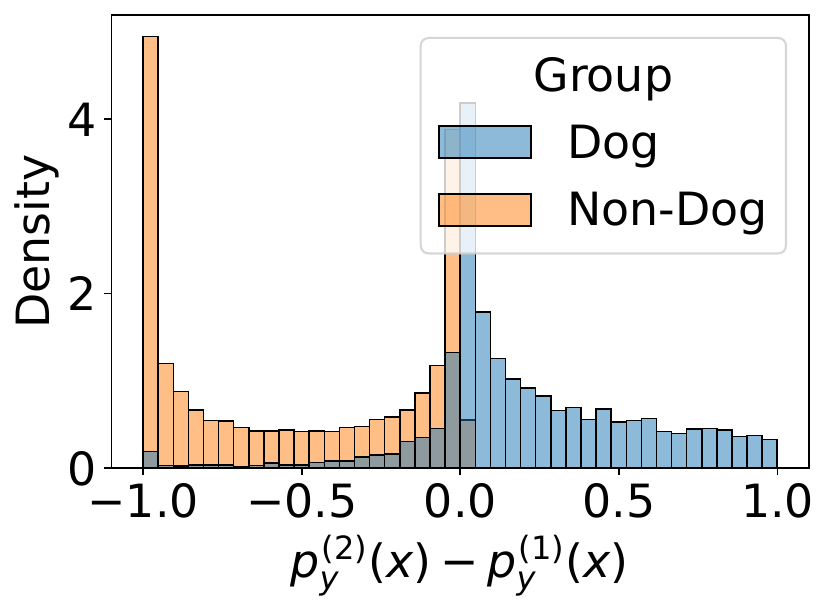}
    \caption{Confidence on ImageNet test set of $p^{(1)}$ (MobileNet V2 trained on the full ImageNet training set), and $p^{(2)}$ (EfficientNet B0 trained only on the dog synset).
    }
    \label{fig:conf_hist}
\end{figure*}

We observe from the distribution of $\max_{y'} p^{(2)}_{y'}(x)$ that the specialist EfficientNet B0
is confident on dog images. To the specialist, non-dog images are out of distribution and so 
$\max_{y'} p^{(2)}_{y'}(x)$ is not a reliable estimate of the confidence. That is, $p^{(2)}$ can be highly confident on non-dog images. As a result, a deferral rule based on Relative Confidence ($ \max_{y''} p^{(2)}_{y''}(x)-\max_{y'} p^{(1)}_{y'}(x)$) may erroneously route non-dog images
to the second model.
Thus, for training a post-hoc model, it is important to consider model 2's probability of the true label 
(i.e., $p_y^{(2)}(x)$), which is low when $(x,y)$ is a labeled image outside the dog synset.

\begin{figure*}[h]
    \centering
    \includegraphics[width=\wid]{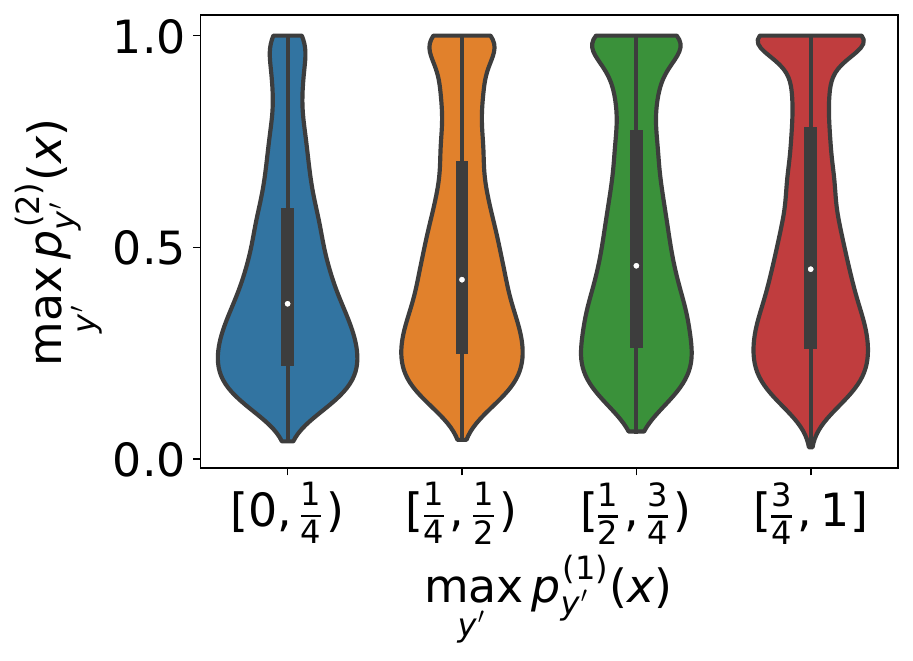}
    \includegraphics[width=\wid]{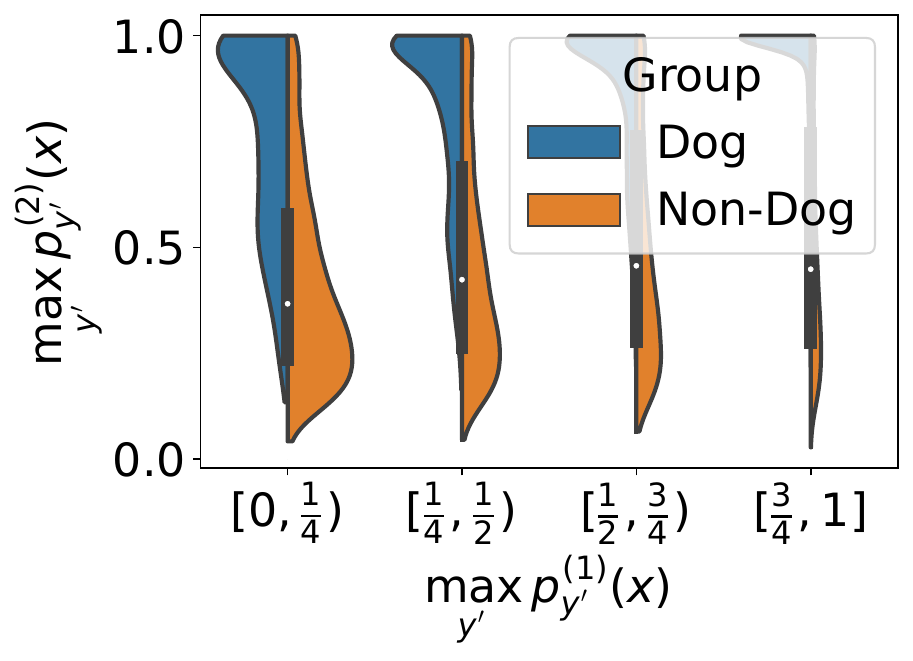}
    \includegraphics[width=\wid]{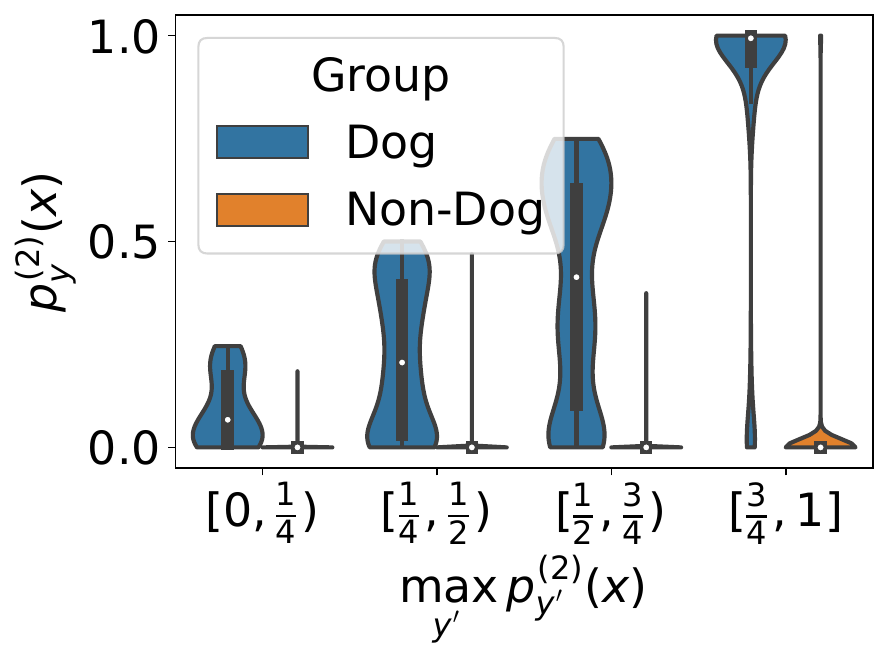}
    \caption{Confidence of MobileNet V2 (trained on the full ImageNet training set) versus confidence of  EfficientNet B0 (trained only on the dog synset).
    }
    \label{fig:conf_scatter}
\end{figure*}
We further show a scatter plot of confidence of both models in \Cref{fig:conf_scatter}.
We observe that there is no clear relationship between $\max_{y'} p^{(1)}_{y'}(x)$ and $\max_{y'} p^{(2)}_{y'}(x)$.

\section{Extension to Multi-Model Cascades}
\label{sec:multi_model_cascades}

One may readily employ post-hoc deferral rules to train multi-model cascades.
Given $K$ classifiers $ h^{(1)}, \ldots, h^{(K)} $,
a simple recipe is to train $K - 1$ deferral rules, with the $k$th rule trained to predict whether or not $h^{(k)}$ should defer to $h^{(k+1)}$.
Each of these individual rules may be trained with any of the objectives detailed in Table~\ref{tbl:post_hoc_estimates}.
At inference time, one may invoke these rules sequentially to determine a suitable termination point.

In Figure~\ref{fig:three_models_label_noise},
we present results for a 3-model cascade in the label noise setting.
As with the 2-model case, post-hoc deferral improves significantly over confidence-based deferral,
owing to the latter wastefully deferring on noisy samples that no model can correctly predict.
Here, the relative inference cost is computed as the relative cost compared to always querying the large model.

\begin{figure*}[h]
    \centering
    \includegraphics[width=0.55\textwidth]{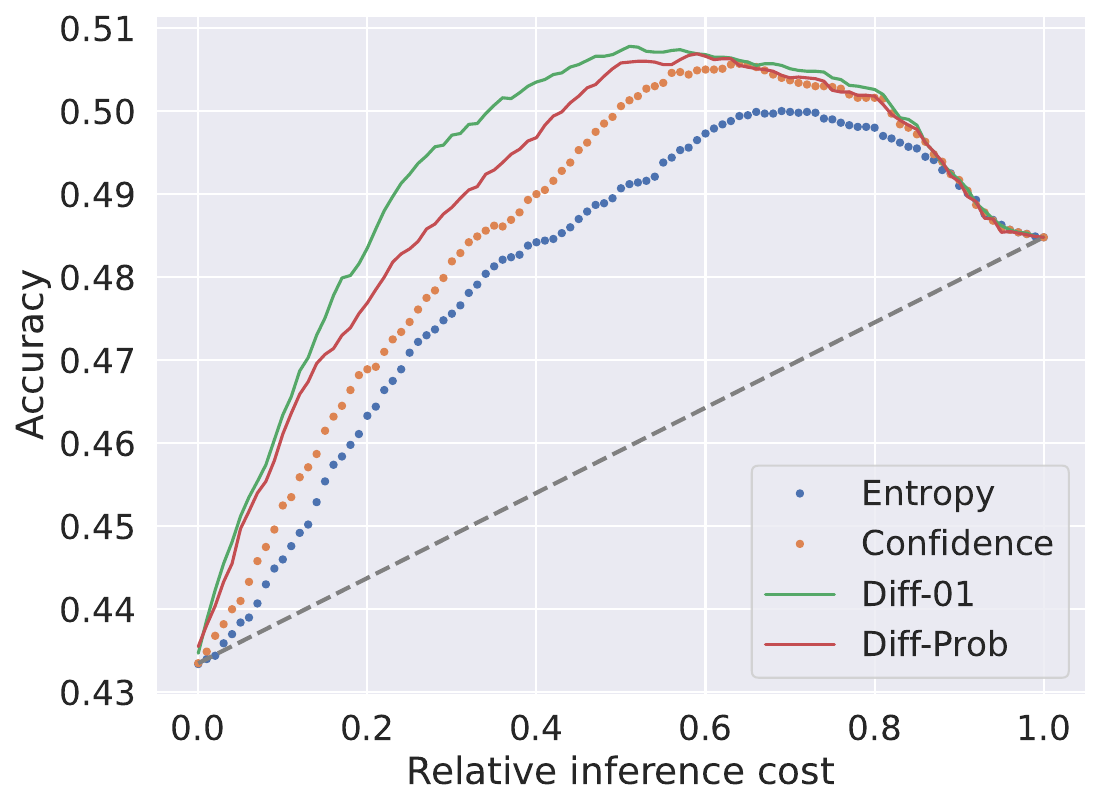}
    \caption{Test accuracy versus deferral rate for a 3-model cascade of { ResNet-8, ResNet-14, ResNet-32 }.
    Results are on CIFAR-100 with targetted noise on the first 25 classes.
    As with the 2-model cascade, post-hoc deferral improves significantly over confidence-based deferral.
    Here, the relative inference cost is computed as the relative cost compared to always querying the large model.    
    }
    \label{fig:three_models_label_noise}
    
\end{figure*}

The above can be seen as a particular implementation of the following generalisation of Proposition~\ref{prop:oracle_k2}.
Suppose we have $K$ classifiers $ h^{(1)}, \ldots, h^{(K)} $,
with inference costs $c^{(1)}, \ldots, c^{(K)} \in [0, 1]$.
We assume without loss of generality that $0 = c^{(1)} \leq c^{(2)} \leq \ldots \leq c^{(K)}$,
i.e., 
the costs reflect the \emph{excess} cost over querying the first model.

Now consider learning
a \emph{selector} $s \colon \XCal \to [ K ]$ that determines which of the $K$ classifiers is used to make a prediction
for $x \in \XCal$.
Our goal in picking such a selector is to minimise the standard misclassification accuracy under the chosen classifier, {plus} an appropriate inference cost penalty:
\begin{equation}
    \label{eqn:defer}
    R( s; h^{(1)}, \ldots, h^{(K)} )
    \defEq \Pr( y \neq h^{( s( x ) )}( x ) ) 
    + \E{}{[ c^{( s( x ) )} ]} .
\end{equation}

{We have the following, which is easily seen to generalise Proposition~\ref{prop:oracle_k2}.}

\begin{lemma}
\label{lemma:opt-selector}
The optimal selector for~\eqref{eqn:defer} is given by
\begin{equation}
    \label{eqn:posthoc}
    s^*( x ) = \underset{k \in [ K ]}{\argmin} \, \mathbb{P}{ ( y \neq h^{( k )}( x ) ) } + c^{( k )}.
\end{equation}
\end{lemma}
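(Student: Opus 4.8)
The plan is to reduce the optimisation over selectors to a pointwise (per-instance) minimisation, exactly in the spirit of the proof of Proposition~\ref{prop:oracle_k2}. First I would rewrite the objective in~\eqref{eqn:defer} by conditioning on $x$ and applying the tower rule. Interpreting $\Pr(y \neq h^{(k)}(x))$ as the conditional misclassification probability $\Pr(y \neq h^{(k)}(x) \mid x)$, which is a function of $x$, one obtains
$$R(s; h^{(1)}, \ldots, h^{(K)}) = \mathbb{E}_x\!\left[ \Pr\!\big(y \neq h^{(s(x))}(x) \mid x\big) + c^{(s(x))} \right].$$
The integrand depends on the selector only through the single value $s(x) \in [K]$ taken at the point $x$.

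The key observation is that $s$ assigns an index in $[K]$ to each $x$ with no constraints coupling its values across different inputs; consequently the expectation is minimised by minimising the integrand separately for each $x$. That is, for each fixed $x$ the per-instance cost $k \mapsto \Pr(y \neq h^{(k)}(x) \mid x) + c^{(k)}$ is a function of the finitely many choices $k \in [K]$, and choosing $s^*(x) = \argmin_{k \in [K]} \Pr(y \neq h^{(k)}(x) \mid x) + c^{(k)}$ attains its minimum. Any other selector can only match or increase the integrand pointwise, hence cannot decrease the expectation, which establishes the optimality of $s^*$ claimed in~\eqref{eqn:posthoc}.

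The argument is essentially routine, so I do not anticipate a genuine obstacle; the only points that require care are \emph{(i)} confirming that the admissible selectors are all (measurable) maps $\XCal \to [K]$, so that the unconstrained pointwise minimiser is itself admissible, and \emph{(ii)} the measurability of $s^*$, which holds since it is an $\argmin$ over a finite index set of measurable functions, with ties broken by, say, selecting the smallest minimising index. As a sanity check I would verify that specialising to $K = 2$ with $c^{(1)} = 0$ and $c^{(2)} = c$ recovers Proposition~\ref{prop:oracle_k2}: there $s^*(x) = 2$ precisely when $(1 - \eta_{h^{(2)}(x)}(x)) + c < 1 - \eta_{h^{(1)}(x)}(x)$, i.e.\ when $\eta_{h^{(2)}(x)}(x) - \eta_{h^{(1)}(x)}(x) > c$, matching~\eqref{eq:optimal_rule}.
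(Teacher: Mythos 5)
Your proof is correct and takes essentially the same route as the paper's own: rewrite the risk via the tower rule as $\mathbb{E}_x$ of a per-instance cost, then minimise pointwise over the unconstrained selector $s$. Your added remarks on measurability and the $K=2$ sanity check recovering Proposition~\ref{prop:oracle_k2} are sound but go beyond what the paper bothers to verify.
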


\begin{proof}[Proof of Lemma~\ref{lemma:opt-selector}]
Observe that
\begin{align*}
    R( s; {h}^{(1)}, \ldots, h^{(K)} ) &= \Pr( y \neq h^{( s( x ) )}( x ) ) + \lambda \cdot \E{}{ c^{( s( x ) )} } \\
    &= \E_{x}{ \E_{y \mid x}{ 1( y \neq h^{( s( x ) )}( x ) ) } + \lambda \cdot c^{( s( x ) )} }.
\end{align*}
Now suppose we minimise $s$ without any capacity restriction.
We may perform this minimisation pointwise:
for any fixed $x \in \XCal$,
the optimal prediction is thus
\begin{align*}
    s^*( x ) &= \underset{k \in [ K ]}{\argmin} \, \E_{y \mid x}{ 1( y \neq h^{( k )}( x ) ) } + \lambda \cdot c^{( k )}.
\end{align*}
\end{proof}
When $K = 2$, 
we exactly arrive at Proposition~\ref{prop:oracle_k2}.
For $K > 2$, the optimal selection is the classifier with the best balance between misclassification error and inference cost.

Lemma~\ref{lemma:opt-selector} may be seen as a restatement of~\citet[Theorem 1]{Trapeznikov:2013},
which established the Bayes-optimal classifiers for a sequential classification setting,
where each classifier is allowed to invoke a ``reject'' option to defer predictions to the next element in the sequence.
This equivalently packs together a standard classifier $h^{(k)}$ and deferral rule $r^{(k)}$ into a new classifier $\bar{h}^{(k)}$.

\section{Limitations}
In this work, we have identified problem settings where confidence-based
deferral can be sub-optimal. These problem settings are (1) specialist models, (2) label noise, and (3) distribution shift (see \Cref{fig:three_settings_grid} for experimental results in these settings).
These problem settings are not exhaustive. Identifying other conditions under
which confidence-based deferral performs poorly is an interesting direction for
future work. Another interesting topic worth investigating is finite-sample behaviors of cascades, both with confidence-based deferral and with post-hoc rules.

\end{document}